\def\eqref#1{equation~\ref{#1}}
\def\1{\bm{1}}
\DeclareMathAlphabet{\mathsfit}{\encodingdefault}{\sfdefault}{m}{sl}
\SetMathAlphabet{\mathsfit}{bold}{\encodingdefault}{\sfdefault}{bx}{n}
\DeclareMathOperator{\Tr}{Tr}
\newcommand{\intsegment}[2][1]{\{#1, \ldots, #2\}}
\crefname{equation}{}{}
\renewcommand{\eqref}[1]{(\ref{#1})}
\theoremstyle{plain}
\declaretheorem[name=Theorem,numberwithin=section]{theorem}
\declaretheorem[name=Lemma,numberwithin=section,sibling=theorem]{lemma}
\declaretheorem[name=Corollary,numberwithin=section,sibling=theorem]{corollary}
\declaretheorem[name=Statement,numberwithin=section,sibling=theorem]{statement}
\theoremstyle{definition}
\declaretheorem[name=Definition,numberwithin=section,sibling=theorem]{definition}
\theoremstyle{remark}
\declaretheorem[name=Remark,numberwithin=section,sibling=theorem]{remark}
\def\defeq{\triangleq}
\DeclareMathOperator{\Id}{\mathrm{Id}}
\DeclareMathOperator{\proba}{\mathbb{P}}
\DeclareMathOperator{\qroba}{\mathbb{Q}}
\newcommand{\jproba}[2]{\proba_{#1,#2}}
\newcommand{\mproba}[2]{\proba_{#1} \! \otimes \proba_{#2}}
\DeclareMathOperator{\expect}{\mathbb{E}} % Математическое ожидание.
\DeclareMathOperator{\variance}{\mathrm{Var}} % Дисперсия.
\DeclareMathOperator{\supp}{\mathrm{supp}} % Носитель меры.
\def\identity{\mathrm{I}}
\DeclareMathOperator{\PMI}{\mathrm{PMI}}
\DeclareMathOperator*{\DKLoperator}{D_{\mathrm{KL}}}
\newcommand{\DKL}[2]{\DKLoperator \left( #1 \, \Vert \, #2 \right)}
\def\naturals{\mathbb{N}}
\def\reals{\mathbb{R}}
\def\PDF{p}
\def\normal{\mathcal{N}}
\def\uniform{\mathrm{U}}
\newcommand{\dotprod}[2]{\left\langle #1 , #2 \right\rangle}
\def\const{\textnormal{const}}
\def\Mid{\,\middle\vert\,}
\def\blankarg{\,\cdot\,}
\title{Efficient Distribution Matching of Representations via Noise-Injected Deep InfoMax}
\author{
\textbf{Ivan Butakov\thanks{Correspondence to \href{mailto:butakov.id@phystech.edu}{butakov.id@phystech.edu}}$^{*,1,2}$, Alexander Semenenko$^{1}$, Alexander Tolmachev$^{1,2}$, Andrey Gladkov$^{1}$}, \\
\textbf{Marina Munkhoeva$^{3}$, Alexey Frolov$^{1}$} \\
$^1$Skolkovo Institute of Science and Technology; 
$^2$Moscow Institute of Physics and Technology; \\
$^3$Artificial Intelligence Research Institute; \\
\texttt{\{butakov.id, semenenko.av, tolmachev.ad, gladkov.ao\}@phystech.su}, \\
\texttt{munkhoeva@airi.net}, 
\texttt{al.frolov@skoltech.ru}\\
% \{butakov.id, tolmachev.ad, malanchuk.sv, \}@phystech.edu
%\AND
%\textbf{Marina Munkhoeva}
}
\begin{document}
\maketitle

\begin{abstract}
    Deep InfoMax (DIM) is a well-established method for self-supervised representation learning (SSRL) based on maximization of the mutual information between the input and the output of a deep neural network encoder.
    Despite the DIM and contrastive SSRL in general being well-explored, the task of learning representations conforming to a specific distribution (i.e., distribution matching, DM) is still under-addressed.
    Motivated by the importance of DM to several downstream tasks (including generative modeling, disentanglement, outliers detection and other), we enhance DIM to enable automatic matching of learned representations to a selected prior distribution.
    To achieve this, we propose injecting an independent noise into the normalized outputs of the encoder, while keeping the same InfoMax training objective.
    We show that such modification allows for learning uniformly and normally distributed representations, as well as representations of other absolutely continuous distributions.
    Our approach is tested on various downstream tasks.
    The results indicate a moderate trade-off between the performance on the downstream tasks and quality of DM.
    %, including classification and generation.
    % Тут конец потом допишем.
\end{abstract}

\section{Introduction}
%Важно строить эмбеддинги
Learning viable low-dimensional representations of complex data plays an important role in many modern applications of artificial intelligence.
This task arises in various domains, including image~\citep{haralick1973textural_features, chen2020big_ss_models, rombach2022latent_diffusion}, audio~\citep{oord2019infoNCE}, and natural language processing~\citep{mikolov2013word2vec, radford2018GPT, devlin2019BERT}.
%, or in the intersection of these areas~\citep{vinyals2015show_and_tell, radford2021CLIP, ho2021classifier_free_diffusion}.
High-quality embeddings are particularly useful for multi-modal methods~\citep{vinyals2015show_and_tell, radford2021CLIP, ho2021classifier_free_diffusion},
statistical and topological analysis~\citep{moor2020topological_AE, duong2022independence_testing, butakov2024lossy_compression_MI}, data visualization~\citep{maaten2008tSNE, mcinnes2018UMAP}, and testing fundamental hypotheses~\citep{brown2023union_of_manifolds_hypothesis, gurnee2024llm_space_time, huh2024platonic_representation_hypothesis}.

%Надо, чтобы было self-supervised
Existing approaches to representation learning can be divided into three categories~\citep{linus2022representation_learning}: \emph{supervised} (requires labeled data), \emph{self-supervised} and \emph{unsupervised} (no labeling is required).
In practice, access to labeled data is limited, which hinders the use of supervised approaches.
Therefore, unsupervised and self-supervised methods are of great importance.
Contrastive learning is a well-established paradigm of self-supervised representation learning (SSRL),
which encourages an encoder to learn similar representations for various augmentations of the same data point,
and dissimilar~-- for different data points.
Deep InfoMax (DIM)~\citep{hjelm2018deep_infomax} leverages information-theoretic quantities to construct a decent contrastive objective,
involving a direct maximization of the useful information contained in the embeddings.
DIM is universal, flexible, and deeply connected to the rigorous information theory,
which allows for a good performance on a variety of downstream tasks to be attained~\citep{hjelm2018deep_infomax, bachman2019DIM_across_views, petar2019DGI, tschannen2020on_DIM, yu2024leveraging_superfluous_information}.

%Важно строить нормально распределённые эмбеддинги, ссылки на работы
%Есть приложения, где полезно иметь нормальные эмбеддинги (РСА, downstream generation)
Acquiring embeddings admitting a specific distribution (i.e., distribution matching, DM) is an auxiliary, yet important task in representation learning.
Latent distributions with straightforward sampling procedures or tractable densities are crucial for downstream generative modeling~\citep{kingma2014VAE, makhzani2016adversarial_AE, larsen2016GAN_VAE, papamakarios2021normflows}.
Additionally, specific distributions (e.g., Gaussian) exhibit properties, which are useful for statistical analysis \citep{tipping1999PCA, duong2022independence_testing}, disentanglement~\citep{higgins2017betavae, balabin2024disentanglement}, and outliers detection~\citep{bazarova2024change_point_detection}.
%As an example, one may recall the connection between correlation and statistical dependence in the case of a joint Gaussian distribution, which is exploited extensively~\citep{tipping1999PCA, duong2022independence_testing}.

A classical approach to latent DM is to optimize a cheap and imprecise distribution dissimilarity measure during the training or architecture search~\citep{ng2011sparse_AE, makhzani2014k_sparse_autoencoders, kingma2014VAE, heusel2017GANs_converge, higgins2017betavae}.
Methods of this family have to rely on several strong assumptions, such as embeddings already admitting a Gaussian distribution,
which eventually leads to suboptimal results.
Another common approach employs adversarial networks to push the learned representations towards a desired distribution~\citep{makhzani2016adversarial_AE, hjelm2018deep_infomax}.
One can also leverage generative models (e.g., normalizing flows or diffusion models) to explicitly perform DM ``post hoc''~\citep{bohm2022probablistic_AE, rombach2022latent_diffusion}.
These two approaches yield decent results, but require supplementary networks to match the distributions.
Finally, injective likelihood-based models can also be used to acquire low-dimensional representations admitting a desired distribution~\citep{brehmer2020manifold_flows, sorrenson2024ML_AE}.
However, likelihood maximization across dimensions is notoriously problematic due to non-square Jacobi matrices.

In contrast to the mentioned approaches, we propose a simple, cost-effective (requiring no additional neural networks) and non-intrusive modification to DIM,
which allows for an automatic and exact DM of the representations.
In the following text, we show that using specific activation functions and noise injections at the outlet of an encoder, combined with the DIM objective, allows for normally and uniformly distributed representations to be learned.
Our contributions are the following:
\begin{enumerate}
    \item We prove that applying normalization and adding a small noise at the outlet of an encoder and maximizing the DIM objective minimizes the Kullback-Leibler divergence between a Gaussian (or uniform) distribution and the distribution of embeddings.
    \item We conduct experiments on several downstream tasks to explore the trade-off between the downstream performance and the accuracy of DM via our method.
    \item We conduct additional experiments to assess the quality of DM via our method in the tasks of generative modelling.
\end{enumerate}

The paper is organized as follows.
In~\cref{section:background}, the necessary background from information theory and original works on Deep InfoMax is provided.
In~\cref{section:method}, we describe the general method for DM via modified DIM.
In~\cref{section:other_methods}, a connection between the proposed approach and other SSRL methods is established.
\cref{section:experiments} is dedicated to the experimental evaluation of our method.
Finally, we conclude the paper by discussing our results in~\cref{section:discussion}.
Complete proofs, additional analysis of infomax-based DM, and technical details are provided in~\cref{appendix:proofs,appendix:distribution_matching,appendix:implementation} correspondingly.

\section{Background}
\label{section:background}

In this section, the background necessary to understand our work is provided.
We start with the basic definitions from the information theory.
Then, the maximum entropy theorems are given, which are crucial for understanding our approach. Finally, we outline the general variant of Deep InfoMax representation learning method, which we aim to enhance with an automatic distribution matching.

\subsection{Preliminaries}
\label{subsection:preliminaries}

%Let $ (\Omega, \mathcal{F}, \proba) $ be a probability space with sample space $ \Omega $, probability measure $ \proba $, and $ \sigma $-algebra $ \mathcal{F} $. 
Let $ (\Omega, \mathcal{F}, \proba) $ be a probability space with sample space $ \Omega $,  $ \sigma $-algebra $ \mathcal{F} $, and probability measure $ \proba $ defined on $\mathcal{F}$.
Consider an absolutely continuous random vector $ X \colon \Omega \rightarrow \mathbb{R}^d $
with the probability density function (PDF) denoted as $ \PDF(x) $.
The differential entropy of $ X $ is defined as follows:
\[
    h(X) = -\expect \log \PDF(X) = - \int\limits_{\mathclap{\supp{X}}} \PDF(x) \log \PDF(x) \, d x,
\]
where $ \supp{X} \subseteq \mathbb{R}^{n} $ represents the \emph{support} of $ X $, and $ \log(\blankarg) $ denotes the natural logarithm.
Similarly, we define the joint differential entropy as $ h(X, Y) = -\expect \log \PDF(x, y) $ and conditional differential entropy as $ h(X \mid Y) = -\expect \log \PDF\left(X \middle| Y\right) = - \expect_{Y} \left(\expect_{X \mid Y} \log \PDF(X \mid Y) \right) $.
Finally, the mutual information (MI) is given by $ I(X; Y) = h(X) - h(X \mid Y) $, and the following equivalences hold
\begin{equation*}
    \label{eq:mutual_information_from_conditional_entropy}
    I(X; Y) = h(X) - h(X\mid Y) = h(Y) - h(Y \mid X),
\end{equation*}
\begin{equation*}
    \label{eq:mutual_information_from_joined_entropy}
    I(X; Y) = h(X) + h(Y) - h(X, Y),
\end{equation*}
\begin{equation*}
    \label{eq:mutual_information_from_KullbackLeibler}
    I(X; Y)
    %= \expect \log \frac{\PDF_{X,Y}(x,y)}{\PDF_X(x) \PDF_Y(y)}
    = \DKL{\jproba{X}{Y}}{\mproba{X}{Y}}.
\end{equation*}

Mutual information can also be defined as an expectation of the \emph{pointwise mutual information}:
\begin{equation}
    \label{eq:pointwise_mutual_information}
    \PMI_{X,Y}(x,y) = \log \left[ \frac{\PDF(x \mid y)}{\PDF(x)} \right], \quad I(X;Y) = \expect \PMI_{X,Y}(X, Y).
\end{equation}

The above definitions can be generalized via Radon-Nikodym derivatives and induced densities in case of distributions supports being manifolds, see~\citep{spivak1965calculus}. We use flexible notation, where entropy and divergence may refer to random vectors or their corresponding distributions.

%In our work, we rely on the maximum entropy properties of Gaussian and uniform distributions:
Our work leverages the maximum entropy properties of Gaussian and uniform distributions:

\begin{theoremE}[Theorem 8.6.5 in~\citet{cover2006information_theory}]
    \label{theorem:Gaussian_maximizes_entropy}
    Let $ X $ be a $ d $-dimensional absolutely continuous random vector with probability density function $ \PDF $, mean $ m $ and covariance matrix $ \Sigma $.
    Then
    \begin{equation*}
        \label{eq:Gaussian_maximizes_entropy}
        h(X) = h\left( \normal(m, \Sigma) \right) - \DKL{\PDF}{\normal(m, \Sigma)}, \qquad h\left( \normal(m, \Sigma) \right) = \frac{1}{2} \log \left( (2 \pi e)^d \det \Sigma \right),
    \end{equation*}
    where $ \normal(m, \Sigma) $ is a Gaussian distribution of mean $ m $ and covariance matrix $ \Sigma $.
\end{theoremE}

\begin{proofE}
    As for any $ m \in \reals^d $ it holds $ h(X - m) = h(X) $,
    let us consider a centered random vector $ X $.
    Denoting the probability density function of $ \normal(0, \Sigma) $ by $ \phi_\Sigma $, we have
    \[
        \DKL{\PDF}{\normal(0, \Sigma)} = \int_{\reals^d} \PDF(x) \log \frac{\PDF(x)}{\phi_\Sigma(x)} \, dx = -h(X) - \int_{\reals^d} \PDF(x) \log \phi_\Sigma(x) \, dx.
    \]
    Now, consider the second term:
    %\begin{align*}
    %    \int_{\reals^d} \PDF(x) \log \phi_\Sigma(x) \, dx 
    %    &= \const + \frac{1}{2} \expect_X x^T \Sigma^{-1} x = \\ 
    %    %&= \const + \frac{1}{2} \Tr(\Sigma^{-1} \expect_X[x x^T]) =  \\
    %    &= \const + \frac{1}{2} \expect_{\normal(0, \Sigma)} x^T \Sigma^{-1} x = \int_{\reals^d} \phi_\Sigma(x) \log \phi_\Sigma(x) \, dx,
    %\end{align*}
    \begin{multline*}
        \int_{\reals^d} \PDF(x) \log \phi_\Sigma(x) \, dx 
        = \const + \frac{1}{2} \expect_X X^T \Sigma^{-1} X = \\
        = \const + \frac{1}{2} \Tr(\Sigma^{-1} \expect_X[X X^T])
        = \const + \frac{1}{2} \Tr(\Sigma^{-1} \Sigma) = \const + \frac{d}{2} = \\
        = \const + \frac{1}{2} \expect_{\normal(0, \Sigma)} X^T \Sigma^{-1} X = \int_{\reals^d} \phi_\Sigma(x) \log \phi_\Sigma(x) \, dx.
    \end{multline*}
    Here, in the second line, the cyclic property of the trace is used.
    %From this, we arrive at the final result.
    %\[
    %    \DKL{\PDF}{\normal(0, \Sigma)} = -h(X) + h(\normal(0, \Sigma)).
    %\]
\end{proofE}

%\begin{proofE}
%    As for any $ m \in \reals^d $ it holds $ h(X - m) = h(X) $,
%    let us consider a centered random vector $ X $.
%    We denote probability density function of $ \normal(0, \Sigma) $ as $ \phi_\Sigma $.
%    \[
%        \DKL{\PDF}{\normal(0, \Sigma)}
%        = \expect \left[ \log \frac{p(x)}{\phi_\Sigma(x)} \right]
%        = - h(X) +  \expect [ - \log \phi_\Sigma(x)].
%    \]
%    Now, consider the last term:
%    \[
%        \expect [- \log \phi_\Sigma(x)]
%        = \frac{1}{2} \log (2 \pi)^d \det(\Sigma) + \frac{1}{2} \expect[x^T \Sigma^{-1} x]
%    \]
%    By the cyclic property of the trace and linearity of expectation, we have $\expect[x^T \Sigma^{-1} x] = \Tr(\Sigma^{-1} \expect[x x^T]) = \Tr(\Sigma^{-1} \Sigma) = d$. Thus,
%    \[
%        \expect [- \log \phi_\Sigma(x)]
%        = \frac{1}{2} \log (2 \pi e)^d \det(\Sigma)
%        \equiv h(\normal(0, \Sigma)).
%    \]
%    Substituting this back, we obtain the desired result.
%\end{proofE}

\begin{theoremE}
    \label{theorem:uniform_maximizes_entropy}
    Let $ X $ be an absolutely continuous random vector with probability density function $ \PDF $
    %and support $ S $ of Lebesgue measure $ 0 < \mu(S) < +\infty $.
    and $ \supp X \subseteq S $, where $ S $ has finite and non-zero Lebesgue measure $ \mu(S) $.
    Then
    \begin{equation*}
        \label{eq:uniform_maximizes_entropy}
        h(X) = h\left( \uniform(S) \right) - \DKL{\PDF}{\uniform(S)}, \qquad h\left( \uniform(S) \right) = 
        \log \mu(S),
    \end{equation*}
    where $ \uniform(S) $ is a uniform distribution on $ S $.
\end{theoremE}

\begin{proofE}
    \begin{multline*}
        \DKL{\PDF}{\uniform(S)} 
        = \int_S \PDF(x) \log \frac{\PDF(x)}{1 / \mu(S)} \, dx = \\ = -h(X) + \int_S \PDF(x) \log \mu(S) \, dx = -h(X) + \log \mu(S) = -h(X) + h(\uniform(S)).
    \end{multline*}
\end{proofE}

\begin{remark}
    \label{remark:no_DKL_entropy_formula_in_general}
    Note that $ h(X) \neq h(X') - \DKL{\PDF_X}{\PDF_{X'}} $ in general.
\end{remark}

Finally, we also utilize the following lower bound on the conditional entropy of a sum of two conditionally independent random vectors:

\begin{lemmaE}
    \label{lemma:independent_sum_entropy}
    Let $ X $ and $ Z $ be random vectors of the same dimensionality,
    independent under the conditioning vector $ Y $.
    Then
    \[
        h(X + Z \mid Y) = h(Z \mid Y) + I(X; X + Z \mid Y) \geq h(Z \mid Y),
    \]
    with equality if and only if there exists a measurable function $ g $ such that $ X = g(Y) $.\footnote{Hereinafter, when comparing random variables, we mean equality or inequality ``almost sure''.}
\end{lemmaE}

\begin{proofE}
    By the definition of mutual information between $ Z $ and $ X + Z $ conditioned on $ Y $, we have
    \[
        I(X; X + Z \mid Y) = h(X + Z \mid Y) - h(X + Z \mid X, Y),
    \]
    where $ h(X + Z \mid X, Y) = h(Z \mid Y)$ due to the independence of $ X $ and $ Z $ given $ Y $.

    Now, let us prove that $ I(X; X + Z \mid Y) = 0 $ if and only if $ \exists g: X = g(Y) $.
    Firstly, if $ X = g(Y) $,
    \[
        I(X; X + Z \mid Y) = I(g(Y); g(Y) + Z \mid Y) = I(0; Z \mid Y) = I(0;Z) = 0
    \]
    Next, consider $ I(X; X + Z \mid Y) = 0 $.
    Thus, $ X $ is conditionally independent both of $ X + Z $ and $ Z $,
    which allows for the multiplicative property of characteristic functions to be used:
    \begin{align*}
        \expect \left( e^{i \dotprod{t}{Z}} \Mid Y \right) &= \expect \left( e^{i \dotprod{t}{-X + (Z + X)}} \Mid Y \right) = \expect \left( e^{-i \dotprod{t}{X}} \Mid Y \right) \expect \left( e^{i \dotprod{t}{Z}} \Mid Y \right) \expect \left( e^{i \dotprod{t}{ X}} \Mid Y \right) = \\
        &= \left[ \expect \left( e^{i \dotprod{t}{X}} \right) \right]^* \expect \left( e^{i \dotprod{t}{X}} \right) \expect \left( e^{i \dotprod{t}{Z}} \right) = \left| \expect \left( e^{i \dotprod{t}{X}} \Mid Y \right) \right|^2 \expect \left( e^{i \dotprod{t}{Z}} \Mid Y \right)
    \end{align*}
    %Note that $ \exists \varepsilon(Y) > 0 $: $ \expect \left( e^{i \dotprod{t}{Z}} \Mid Y \right) \neq 0 $ for $ \| t \| < \varepsilon(Y) $ (non-vanishing property).
    %Thus, $ \left| \expect \left( e^{i \dotprod{t}{X}} \Mid Y \right) \right| = 1 $ for $ \| t \| < \varepsilon(Y) $, which results in $ X = \expect (X \mid Y) \defeq g(Y) $ (Theorem 6.4.7 in~\citep{chung2001probability_theory}).

    Note that there exists $\varepsilon(Y) > 0 $ such that $ \expect \left( e^{i \dotprod{t}{Z}} \Mid Y \right) \neq 0 $ for $ \| t \| < \varepsilon(Y) $ (non-vanishing property).
    Consequently, $ \left| \expect \left( e^{i \dotprod{t}{X}} \Mid Y \right) \right| = 1 $ holds for $ \| t \| < \varepsilon(Y) $.
    This implies that the conditional distribution of $ X $ given $ Y $ is a $ \delta $-distribution (Theorem 6.4.7 in~\citep{chung2001probability_theory}), leading to $ X = \expect (X \mid Y) $, where $ g(Y) \defeq \expect (X \mid Y) $ is a $ \sigma(Y) $-measurable function.
\end{proofE}

\subsection{Deep InfoMax}
\label{subsection:deep_infomax}

Mutual information (MI) is widely considered as a fundamental measure of statistical dependence between random variables due to its key properties, such as invariance under diffeomorphisms, subadditivity, non-negativity, and symmetry~\citep{cover2006information_theory}. These attributes make MI particularly useful in information-theoretic approaches to machine learning. The concept of maximizing MI between input and output, known as the infomax principle~\citep{linsker1988self-organisation, bell1995infomax}, serves as the foundation for Deep InfoMax (DIM), a family of self-supervised representation learning methods proposed by \citet{hjelm2018deep_infomax}.

A na\"{i}ve DIM approach suggests learning the most informative embeddings via a direct maximization of the mutual information between the original data and compressed representations:
\[ I(X; f(X)) \to \max, \]
where $ X $ is the random vector to be compressed, and $ f $ is the encoding mapping (being learned).
However, despite its simplicity and intuitiveness, this setting is rendered useless by the fact that $ I(X;f(X)) = \infty $ for a wide range of $ X $ and $ f $~\citep{bell1995infomax, hjelm2018deep_infomax}.

To avoid this limitation, it is usually suggested to augment (crop randomly, add noise, etc.) original data to inject stochasticity and make information-theoretic objectives non-degenerate. Consider the following Markov chain:
\[
    f(X) \longrightarrow X \longrightarrow X',
\]
where $ X' $ is augmented data.
Now $ I(X'; f(X)) $ can be made non-infinite.
Moreover, according to the data processing inequality~\citep{cover2006information_theory}, $ I(X'; f(X)) \leq I(X; f(X)) $, which connects this new infomax objective with the na\"{i}ve one.

The only remaining problem is the high dimensionality of $ X' $, which makes MI estimation difficult.
To resolve this, one can apply an additional (in general, random) dimensionality-reducing transformation to replace $ X' $ with $ Y' $:
\[
    f(X) \longrightarrow X \longrightarrow X' \longrightarrow Y',
\]
with $ I(Y'; f(X)) \leq I(X'; f(X)) $ being the new infomax objective.
For the sake of computational simplicity, $ Y' $ can be defined as $ f(X') $; thus, the same encoder network is used to compress both the original and augmented data.
Moreover, one can view $ I(f(X');f(X)) $ as a contrastive objective: this value is high when $ f $ yields similar representations for corresponding augmented and non-augmented samples, and dissimilar for other pairs of samples.
%In our work, however, we explore a slightly different choice for $ Y' $, which we will discuss later.

Now, let us consider the task of mutual information maximization.
As MI is notoriously hard to estimate~\citep{mcallester2020formal_limitations}, lower bounds are widely used to reparametrize the infomax objective~\citep{belghazi2021MINE, oord2019infoNCE, hjelm2018deep_infomax}.
%Although the final result of representation learning may depend on the choice of the bound,
In this work, we only consider the two most popular approaches based on the variational representations of the Kullback-Leibler divergence:
\begin{align}
    \Centerstack[r]{\textit{Donsker-Varadhan} \\ (\citealt{donsker1983markov_process} \\ \citealt{belghazi2021MINE})} \qquad I(X;Y) &= \sup_{T \colon \Omega \to \reals} \left[ \expect_{\jproba{X}{Y}} T - \log \expect_{\mproba{X}{Y}} \exp(T) \right]
    \label{eq:Donsker_Varadhan_MI} \\
    \Centerstack[r]{\textit{Nguyen-Wainwright-Jordan} \\ (\citealt{nguyen2010NWJ}, \\ \citealt{belghazi2021MINE})} \qquad I(X;Y) &= \sup_{T \colon \Omega \to \reals} \left[ \expect_{\jproba{X}{Y}} T - \expect_{\mproba{X}{Y}} \exp(T - 1) \right]
    \label{eq:NWJ_MI}
\end{align}
where $ \Omega $ is the sampling space,
and $ T $ is a measurable critic function.

\begin{remark}
    \label{remark:PMI_maximizes_bounds}
    Assuming $ \PMI_{X,Y} $ exists,
    the supremum in~\eqref{eq:Donsker_Varadhan_MI} and~\eqref{eq:NWJ_MI} is attained at and only at
    \begin{equation}
        \label{eq:optimal_lower_bound_critic}
        T^* = T^*(x,y) = \PMI_{X,Y}(x,y) + \alpha \quad \textnormal{(a.s. w.r.t. $ \jproba{X}{Y}) $},
    \end{equation}
    where $ \alpha = 1 $ for~\eqref{eq:NWJ_MI} and can be any real number for~\eqref{eq:Donsker_Varadhan_MI} (follows from Theorem 1 in~\citet{belghazi2021MINE} and Theorem 2.1 in~\citet{keziou2003dual_representation_f_divergence}).
\end{remark}

%\begin{proofE}
%    Note that
%    \[
%        \expect_{\mproba{X}{Y}} \exp \left( \PMI_{X,Y}(x,y) \right) = \expect_{\mproba{X}{Y}} \left[ \frac{\partial \jproba{X}{Y}}{\partial \mproba{X}{Y}}(x,y) \right] = \expect_{\jproba{X}{Y}} 1 = 1
%    \]
%    Thus,
%    \begin{align*}
%        \expect_{\jproba{X}{Y}} T^* - \log \expect_{\mproba{X}{Y}} \exp(T^*) &= \expect_{\jproba{X}{Y}} \PMI_{X,Y}(x,y) + \alpha - \log\left(1 \cdot e^\alpha\right) = I(X;Y) \\
%        \expect_{\jproba{X}{Y}} T^* - \expect_{\mproba{X}{Y}} \exp(T^* - 1) &= \expect_{\jproba{X}{Y}} \PMI_{X,Y}(x,y) + \alpha - 1 \cdot e^{\alpha - 1} = I(X;Y) + \alpha - e^{\alpha - 1}
%    \end{align*}
%\end{proofE}

In practice, $ f $ and $ T $ are approximated via corresponding neural networks,
with the parameters being learned through the maximization of the Monte-Carlo estimate of~\eqref{eq:Donsker_Varadhan_MI} or~\eqref{eq:NWJ_MI}.
Although~\eqref{eq:Donsker_Varadhan_MI} usually yields better estimates, the SGD gradients of this expression are biased in a mini-batch setting~\citep{belghazi2021MINE}.

\section{Deep InfoMax with automatic distribution matching}
\label{section:method}

In the present section, we modify DIM to enable automatic distribution matching (DM) of representations.
Specifically, we propose adding independent noise $ Z $ to normalized representations of $ X $,
thus replacing $ f(X) $ by $ f(X) + Z $ in the infomax objective from~\cref{subsection:deep_infomax}.
This corresponds to the following Markov chain:
\[
    f(X) + Z \longrightarrow f(X) \longrightarrow X \longrightarrow X' \longrightarrow f(X').
\]
By the data processing inequality, we know that $ I(f(X'); f(X) + Z) \leq I(f(X'); f(X)) $, which connects our method to the family of conventional DIM approaches discussed previously.

As we will demonstrate, the noise injection at the outlet of an encoder, in conjunction with~\cref{theorem:Gaussian_maximizes_entropy,theorem:uniform_maximizes_entropy} on maximum entropy, allows us to achieve normally or uniformly distributed embeddings.
To prove this, we use the following~\cref{lemma:infomax_objective_decomposition},
which decomposes the proposed infomax objective into three components: the entropy of representations with an additive noise $ Z $, the entropy of the noise itself, and the mutual information between $ f(X) $ and $ f(X) + Z $ conditioned on $ f(X') $~-- representations of augmented data.

\begin{lemmaE}
    \label{lemma:infomax_objective_decomposition}
    Consider the following Markov chain of absolutely continuous random vectors:
    \[
        f(X) + Z \longrightarrow X \longrightarrow X' \longrightarrow f(X'),
    \]
    with $ Z $ being independent of $ (X,X') $.
    Then
    %\begin{align*}
    %    I(f(X'); f(X) + Z) &= h(f(X) + Z) - h(f(X) + Z \mid f(X')) = \\
    %    &= h(f(X) + Z) - h(Z) - I(f(X) + Z; f(X) \mid f(X')).
    %\end{align*}
    \begin{equation}
        \label{eq:infomax_objective_decomposition}
        I(f(X'); f(X) + Z) = h(f(X) + Z) - h(Z) - I(f(X) + Z; f(X) \mid f(X')).
    \end{equation}
\end{lemmaE}

\begin{proofE}
    From the definition of mutual information, we have
    \[
        I(f(X'); f(X) + Z) = h(f(X) + Z) - h(f(X) + Z \mid f(X')).
    \]
    We apply~\cref{lemma:independent_sum_entropy} to rewrite the second term
    \begin{align*}
        h(f(X) + Z \mid f(X')) 
        &= h(Z \mid f(X')) + I(f(X); f(X) + Z \mid f(X')) \\ 
        &= h(Z) + I(f(X); f(X) + Z \mid f(X')),
    \end{align*}
    where the independence of $ Z $ and $ f(X') $ is used.
\end{proofE}

\cref{lemma:infomax_objective_decomposition} highlights the importance of both additive noise and input data augmentation.
Specifically, if  no augmentation is applied, i.e., $ X = X' $,
then $ I(f(X) + Z; f(X) \mid f(X')) = 0 $.
In this case, maximizing the infomax objective is reduced to maximizing the entropy $ h(f(X) + Z) $.
Under the corresponding restrictions on $ f(X) $,
this is equivalent to distribution matching, see~\cref{theorem:Gaussian_maximizes_entropy,theorem:uniform_maximizes_entropy}.
In contrast, if no noise is added, $ I(f(X');f(X)) $ is not bounded in general, which may prevent $ h(f(X)) $ from saturation in practical scenarios.

Additionally, DM alone does not guarantee that the learned representations will be meaningful or useful for downstream tasks.
That is why we also show that using $ X \neq X' $ allows us to recover meaningful embeddings,
as $ I(f(X) + Z; f(X) \mid f(X')) $ is set to zero by learning representations that are \emph{weakly invariant} to selected data augmentations:

\begin{definition}
    \label{definition:weak_invariance}
    We call an encoding mapping $ f $ \emph{weakly invariant} to data augmentation $ X \to X' $
    %if $ f(X) $ and $ f(X') $ are identical, with probability one, conditioned on $ X' \in \supp X $:
    %\[
    %    \proba(f(X) = f(X') \mid X' \in \supp X) = 1
    %\]
    %if $ \exists g $: $ f(X) = g(f(X)) = g(f(X')) $ almost surely.
    % Если хотите, замените на версию с квантором
    if there exists a function $ g $ such that $ f(X) = g(f(X)) = g(f(X')) $ almost surely.
\end{definition}

\begin{lemmaE}
    \label{lemma:embedding_invariance}
    Under the conditions of~\cref{lemma:infomax_objective_decomposition},
    let $ \proba ( X = X' \mid X) \geq \alpha > 0 $.
    Then, $ I(f(X) + Z; f(X) \mid f(X')) = 0 $ precisely when $ f $ is weakly invariant to $ X \to X' $.
\end{lemmaE}

\begin{proofE}
    According to ~\cref{lemma:independent_sum_entropy},
    $ I(f(X) + Z; f(X) \mid f(X')) = 0 $ if and only if $ f(X) = g(f(X')) $ for some $ g $.
    Thus, weak invariance implies $ I(f(X) + Z; f(X) \mid f(X')) = 0 $.
    
    On the other hand, if $ f(X) = g(f(X')) $,
    \[
        \proba ( f(X) \in (g^{-1} \circ f)(X) \mid X) \geq \proba ( f(X) = f(X') \mid X) \geq \proba ( X = X' \mid X) \geq \alpha > 0.
    \]
    Note that the predicate $ P(X) \defeq `` f(X) \in (g^{-1} \circ f)(X) $'' is not random when conditioned on $ X $.
    Thus, $ \proba ( P(X) \mid X) = \phi(X) $,
    where $ \phi $ is a function taking values in $ \{ 0, 1 \} $.
    As $ \phi(X) \geq \alpha > 0 $, $ \phi(X) = 1 $.
    This implies $ g(f(X)) = f(X) $ almost surely.

    % И чо дальше? O.O
\end{proofE}
% \begin{proofE}
%     % Я пока не придумал как ослабить условие слобой инвариантности, поэтому просто предлагаю в окончательную версию добавить что-то из следующим образом составленного док-ва
%     If a map $f$ is weakly invariant, then $f(X) = g(f(X'))$. Therefore, $I(f(X) + Z; f(X) \mid f(X')) = 0$. Conversely, suppose we are given that $I(f(X) + Z; f(X) \mid f(X')) = 0$. Then, according to~\cref{lemma:independent_sum_entropy}, there exists a measurable function $g$ such that $f(X) = g(f(X'))$. Let us show that $f(X) = g(f(X))$.

%     Note that
%     \[
%         \proba\{ f(X) = f(X') \mid X \} = \proba\{ X = X' \mid X \} \geq \alpha > 0.
%     \]
%     On the other hand, 
%     \[
%         \proba\{ f(X) \in g^{-1}(f(X)) \}
%         \geq
%         \proba\{ f(X) = f(X') \mid X \} > 0,
%     \]
%     where $g^{-1} \ldots$
% \end{proofE}

%Moreover, we argue that the signal-to-noise ratio of our setup serves as a tradeoff between the distribution matching objective and  robustness to the data augmentations.
%Indeed, if $ Z $ is insignificant, 
% НЕ ЗАБЫТЬ: тройная взаимная информация

Overall, the signal-to-noise ratio serves as a tradeoff between the distribution matching objective and robustness to the data augmentations (the first and the third term in~\eqref{eq:infomax_objective_decomposition} correspondingly):
higher magnitudes of $ Z $ impose tighter bounds on $ I(f(X) + Z; f(X) \mid f(X')) $, thus prioritizing maximization of $ h(f(X) + Z) $.
%lower magnitudes of $ Z $ hinder the saturation of $ I(f(X');f(X)+Z) $, with $ I(f(X) + Z; f(X) \mid f(X')) $ being the main term responsible for this behavior.
In what follows, we formalize the provided reasoning by addressing Gaussian and uniform distribution matching separately.
We also briefly discuss how a general DM problem can be reduced to the normal or uniform one.

\subsection{Gaussian distribution matching}
\label{subsection:Gaussian_distribution_matching}

Let us assume $ Z $ having finite second-order moments.
According to~\cref{theorem:Gaussian_maximizes_entropy}, if we restrict $ f(X) $ by fixing its covariance matrix, $ h(f(X) + Z) $ attains its maximal value precisely when $ f(X) + Z $ is distributed normally.
Now, as $ Z $ is independent of $ X $, $ f(X) + Z $ is normally distributed if and only if the distribution of $ f(X) $ is also Gaussian.

Thus, one can achieve the normality of $ f(X) $ via restricting second-order moments of $ f(X) $ and maximizing the entropy $ h(f(X) + Z) $.
In combination with~\cref{lemma:infomax_objective_decomposition},
this forms a basis of the proposed method for Gaussian distribution matching.
Moreover, we show that the imposed restrictions can be partially lifted via only requiring $ \variance(f(X)_i) = 1 $ for every $ i \in \intsegment{d} $.
This approach is preferable in practice, as the widely-used batch normalization~\citep{ioffe2015batch_normalization} can be employed to restrict the variances.
We formalize our findings in the following theorem.

\begin{theoremE}[Gaussian distribution matching]
    \label{theorem:Gaussian_distribution_mathcing}
    Let the conditions of~\cref{lemma:embedding_invariance} be satisfied. Assume $ Z \sim \normal(0, \sigma^2 \identity) $, $ \expect f(X) = 0 $ and $ \variance \left( f(X)_i \right) = 1 $ for all $i \in \intsegment{d} $.
    Then, the mutual information $ I( f(X'); f(X) + Z) $ can be upper bounded as follows
    \begin{equation}
        \label{eq:Gaussian_embeddings_MI_upper_bound}
        I(f(X'); f(X) + Z) \leq \frac{d}{2} \log \left( 1 + \frac{1}{\sigma^2} 
        %1 / \sigma^2
        \right),
    \end{equation}
    with the equality holding exactly when $ f $ is weakly invariant and $ f(X) \sim \normal(0, \identity) $.
    % (\textit{embeddings normality})
    Moreover,
    \[
        \DKL{f(X)}{\normal(0, \identity)} \leq I(Z; f(X) + Z) - I(f(X'); f(X) + Z) - d \log \sigma.
        %\DKL{f(X)}{\normal(0, \identity)} \leq h(f(X) + Z) - h(f(X)) - I(f(X'); f(X) + Z) - d \log \sigma
        % Это вообще бесполезно:
        %\DKL{f(X)}{\normal(0, \identity)} \leq \frac{d}{2} \log \left( 1 + 1 / \sigma^2 \right) - I(f(X'); f(X) + Z) + h(\normal(0, \identity)) - h(f(X))
    \]
\end{theoremE}

\begin{proofE}
    Applying the result from~\cref{lemma:upper_bound_DKL_Gaussian} gives us:
    \[
        \DKL{f(X) + Z}{\normal(0, (1 + \sigma^2) \identity)}
        \leq
        \frac{d}{2} \log \left( 1 + \frac{1}{\sigma^2} \right)
        -
        I(f(X'); f(X) + Z).
    \]
    Since KL-divergence is non-negative, we obtain the desired bound. Furthermore, equality holds exactly when $ f $ is weakly invariant and $ f(X) \sim \normal(0, \identity) $, as discussed in the proof of~\cref{lemma:upper_bound_DKL_Gaussian}.
    
    The second inequality involving $ \DKL{f(X)}{\normal(0, \identity)} $, follows from~\cref{corollary:upper_bound_DKL_embedding_stand_Gaussian}.
\end{proofE}

% \begin{proofE}
%     From~\cref{lemma:infomax_objective_decomposition} we have
%     \[
%         I(f(X'); f(X) + Z) = h(f(X) + Z) - h(\normal(0,\sigma^2 \identity)) - I(f(X) + Z; f(X) \mid f(X')).
%     \]
%     Using~\cref{theorem:Gaussian_maximizes_entropy}, we can rewrite the first term, which yields
%     \begin{multline*}
%         I(f(X'); f(X) + Z) = h(\normal(m, \Sigma)) - h(\normal(0,\sigma^2 \identity)) \\ - \underbrace{\DKL{f(X) + Z}{\normal(m, \Sigma)}}_{\geq 0} - \underbrace{I(f(X) + Z; f(X) \mid f(X'))}_{\geq 0},
%     \end{multline*}
%     where $ m $ and $ \Sigma $ are the mean and covariance matrix of $ f(X) + Z $.
%     Note that $ \DKL{f(X) + Z}{\normal(m, \Sigma)} = 0 $ if and only if $ (f(X) + Z) \sim \normal(m, \Sigma) $.
    
%     Now recall that, according to~\cref{lemma:independent_sum_entropy},
%     $ I(f(X) + Z; f(X) \mid f(X')) = 0 $ if and only if $ \exists g: f(X) = g(f(X')) $.
    
%     Next,
%     \[
%         h(\normal(m, \Sigma)) \leq \sum_{i=1}^d h(\normal(m_i, \variance(f(X)_i) + \sigma^2)) = d \cdot h(\normal(0, 1 + \sigma^2)),
%     \]
%     with the equality holding if and only if $ \Sigma $ is diagonal
%     (which, combined with $ \variance(f(X)_i) = 1 $, means $ \Sigma = \identity $).
    
%     Finally,
%     \[
%         d \cdot h(\normal(0, 1 + \sigma^2)) - h(\normal(0,\sigma^2 \identity)) = \frac{d}{2} \left[ \log(1 + \sigma^2) - \log \sigma^2 \right] = \frac{d}{2} \log 
%         \left( 1 + 1 / \sigma^2 \right).
%     \]
%     The second bound for $ \DKL{f(X)}{\normal(0, \identity)} $ follows from~\cref{lemma:DKL_embedding_standard_Gaussian}.
% \end{proofE}

While the first inequality in~\cref{theorem:Gaussian_distribution_mathcing} suggests that exact distribution matching is achieved only asymptotically,
the second expression shows that $ \DKL{f(X)}{\normal(0, \identity)} $ can be bounded using the value of the proposed infomax objective.
Note that $ I(Z; f(X) + Z) $ is typically upper bounded,
except the rare cases where the support of $ f(X) $ is degenerate.

\subsection{Uniform distribution matching}
\label{subsection:uniform_distribution_matching}
Now consider $Z$ distributed uniformly on $[-\varepsilon; \varepsilon]^d$. We assume that $f(X)$ has bounded support within $[0, 1]^d$. This restriction serves as an alternative to fixing the second-order moments, as in the Gaussian case, and is easy to implement in practice (e.g., via a sigmoid function).
By~\cref{theorem:uniform_maximizes_entropy}, the entropy $h(f(X) + Z)$ is maximized when $f(X) + Z$ follows a uniform distribution. This can be achieved when $f(X)$ conforms to a discrete uniform distribution over an equidistant grid within $[0, 1]^d$. As $\varepsilon$ approaches zero, the distribution of $f(X)$ gradually converges to a continuous uniform distribution over $[0, 1]^d$. Thus, injecting uniform noise asymptotically drives $f(X)$ toward a uniform representation over $[0, 1]^d$. This reasoning is formalized in the following theorem.

\begin{theoremE}[Uniform distribution matching]
    \label{theorem:uniform_distribution_matching}
    Under the conditions of~\cref{lemma:infomax_objective_decomposition},
    let $ Z \sim \uniform([-\varepsilon;\varepsilon]^d) $ and $ \supp f(X) \subseteq [0;1]^d $.
    Then, the mutual information $ I( f(X'); f(X) + Z) $ can be upper bounded as follows
    \begin{equation}
        \label{eq:uniform_embeddings_MI_upper_bound}
        I(f(X'); f(X) + Z) 
        \leq 
        d \log 
        \left(1 + 
        %1 / (2 \varepsilon)
        \frac{1}{2 \varepsilon} 
        \right).% - o(\varepsilon),
    \end{equation}
    with the equality if and only if $ 1/\varepsilon \in \naturals $, $ f $ is weakly invariant, and %$ f(X) \sim \uniform(\{0, \varepsilon, \ldots, 1\}) $.
    $ f(X) \sim \uniform(A) $, where the set $A = \{0, 2\varepsilon, 4\varepsilon, \ldots, 1\}$ contains $(1/(2\varepsilon) + 1)$ elements.
    
    Moreover,
    \[
        \DKL{f(X)}{\uniform([0;1]^d)} \leq I(Z; f(X) + Z) - I(f(X'); f(X) + Z) - d \log (2\varepsilon).
    \]
\end{theoremE}

\begin{proofE}
    Proceeding similarly to the Gaussian case, we apply~\cref{lemma:upper_bound_DKL_uniform}, yielding 
    \[
        \DKL{f(X) + Z}{\uniform([-\varepsilon; 1 + \varepsilon]^d)} \leq d \log \left( 1 + \frac{1}{2 \varepsilon} \right) - I(f(X'); f(X) + Z),
    \]
    where the left-hand side is non-negative, so we obtain the claimed inequality. The conditions for equality are also established through~\cref{lemma:upper_bound_DKL_uniform}.

    To prove the second upper-bound, concerning $ \DKL{f(X)}{\uniform([0;1]^d)} $, it suffices to use the result  from~\cref{corollary:upper_bound_DKL_embedding_uniform}.
\end{proofE}

% \begin{proofE}
%     Similarly to the proof of~\cref{theorem:Gaussian_distribution_mathcing}, we use the decomposition of the infomax objective from~\cref{lemma:infomax_objective_decomposition}.
%     Afterward, we apply~\cref{theorem:uniform_maximizes_entropy} to the term $h(f(X) + Z)$:
%     \begin{multline*}
%         I(f(X'); f(X) + Z) = h(\uniform([-\varepsilon;\varepsilon + 1]^d)) \\
%         - \underbrace{\DKL{f(X) + Z}{\uniform([-\varepsilon;\varepsilon + 1]^d)}}_{\geq 0} - h(\uniform([-\varepsilon;\varepsilon]^d)) - \underbrace{I(f(X) + Z; f(X) \mid f(X'))}_{\geq 0}.
%     \end{multline*}
%     Therefore, the mutual information can be bounded as:
%     \[
%         I(f(X'); f(X) + Z)
%         \leq
%         \log(1 + 2 \varepsilon)^d - \log( 2 \varepsilon )^d
%         =
%         d \log \left( 1 + 1/(2 \varepsilon) \right).
%     \]
    
%     % If $ f $ is weakly invariant and $ f(X) \sim \uniform([0;1]^d) $, then the attained value of the infomax objective can be calculated directly (see \citep{Butakov2024MutualIE}):
%     % \[
%     %     I(f(X); f(X) + Z)
%     %     =
%     %     d \log \left( 1 + 1/(2 \varepsilon) \right).
%     % \]
% \end{proofE}

In sharp contrast to~\cref{theorem:Gaussian_distribution_mathcing}, the equality in~\eqref{eq:uniform_embeddings_MI_upper_bound} is attained at $ f(X) $ conforming to a discrete distribution.
This makes the proposed objective less attractive in comparison to the Gaussian distribution matching.
However, one still can be assured that the standard continuous uniform distribution allows us to approach the equality in~\eqref{eq:uniform_embeddings_MI_upper_bound} with $ \varepsilon $ approaching zero:

\begin{remark}[\citealt{Butakov2024MutualIE}]
    If $ f(X) \sim \uniform([0;1]^d) $, $ f $ is weakly invariant, and $ \varepsilon < 1/2 $, then
    \[
        I(f(X'); f(X) + Z) = d \left( \varepsilon - \log (2 \varepsilon) \right) 
        =  d \log 
        \left(1 + 
        1 / (2 \varepsilon)
        %\frac{1}{2 \varepsilon} 
        \right) - \smash{\underbrace{\left( \log(1 + 2 \varepsilon) - \varepsilon \right)}_{o(\varepsilon)}}.
    \]
\end{remark}

\subsection{General case}
\label{subsection:general_ditribution_matching}

Our approach can be extended to a wide range of desired distributions of embeddings using the probability integral transform~\citep{david1938PI_transform, chen2000gaussianization}.
Normalizing flows (learnable diffeomorphisms) can also be leveraged to transform the distribution ``post-hoc'' due to their universality property~\citep{huang2018neural,jaini19sum_of_squares_normflows,kobyzev2020normalizing_flows_review}.
%, there has been proven the property of universality for deep sigmoidal flows.

%\begin {theoremE}
%Let $X$ be a random vector in an open set $U \subset \reals^m$. Let $Y$ be a random vector in $\reals^m$. Assume both $X$ and $Y$ have a positive and continuous probability density distribution. Then there exists a sequence of functions $(K_n)_{n \geq 1}$ parameterized by autoregressive neural networks of specific form such that sequence $Y_n$ = $K_n(X)$ converges in distribution to $Y$. The exact form can be found in the cited article.
%\end{theoremE}

%Also, the article \citep{jaini19sum_of_squares_normflows} establishes the same property for sum-of-squares polynomial flows. Thus, if one wants embeddings to have distribution $W$, they should apply our method for computing normally-distributed embeddings and then construct a normalizing flow, converting them to the distribution $W$. A survey paper \cite{Kobyzev2020NormalizingFA} suggests a review of possible architectures for this task.

%Let $P$ be the distribution of embeddings resulted from this approach, $N$ -- standard multivariate normal distribution, $W$ -- desired distribution. Assume $g$ to be the discussed normalizing flow with inverse $g^{(-1)}$. To estimate $\DKL{P}{W}$, we recall the following theorem.

The data processing inequality can be used to estimate the quality of DM after the transformation:

\begin{statement}[Corollary 2.18 in~\citep{polyanskiy2024information_theory}]
    Let $ g $ be a measurable (w.r.t $ \proba $ and $ \qroba $) function.
    Then
    $ \DKL{\proba \circ g^{-1}}{\qroba \circ g^{-1}} \leq \DKL{\proba}{\qroba} $,
    where $ \proba \circ g^{-1} $ and $ \proba \circ g^{-1} $ denote the push-forward measures of $ \proba $ and $ \qroba $ after applying $ g $.
\end{statement}
%As long as normalizing flows are invertible, the function $g^{(-1)}$exists.

%Then $\DKL{P}{W} = \DKL{g^{(-1)}(P)}{g^{(-1)}(W)} = \DKL{g^{(-1)}(P)}{N}$. Here we apply $g^{(-1)}$ to the whole distribution implying the same operation for corresponding random vectors. This equality shows that measuring the distance between distribution of embeddings and final distribution 
\begin{comment}
P - распр. эмб.
Q - гауссово.
W - хотим.

Учим g: g(Q) = W

Известно: DKL(P,Q) = DKL(g(P),g(Q)=W)
\end{comment}

\section{Connection to other methods for SSRL}
\label{section:other_methods}

In this section, we explore the relation of our approach to other methods for unsupervised and self-supervised representation learning.
We argue that the proposed technique for distribution matching can be applied to any other method, given the latter can be formulated in terms of mutual information maximization.

\paragraph{Autoencoders}

In reconstruction-based generative models \citep{makhzani2016adversarial_AE}, the reconstruction error is closely tied to the mutual information (Theorem 8.6.6 in~\citep{cover2006information_theory}):
\[
    I(X; Y) 
    = h(X) - h(X \mid Y)  
    \geq h(X) - \frac{d}{2} \log( 2 \pi e) - \frac{1}{2} \log \expect[\|X - \hat{X}(Y)\|^2].
\]
%Here, $ X $ denotes the input, $ Y $ is the latent representation produced by the encoder applied to samples from the source distribution, and $\hat{X}(Y)$ denotes the reconstructed input. 
Here, $ X $ denotes the input, $ Y $ the latent representation produced by the encoder, and $\hat{X}(Y)$ the reconstructed input. 
The last term involving the expected squared error\footnote{Here and throughout, $\|\cdot\|$ denotes the Euclidean norm.}
%\footnote{Here and after we use consider a normed vector space $ (\reals^d, \|\cdot\|) $ endowed with Euclidean norm $ \|\cdot\| $.}
is essentially the autoencoder's loss function. By minimizing this reconstruction loss, the mutual information between the input data and its representation is maximized.
However, recall that $ I(X; Y) $ can diverge to infinity, as discussed earlier, so it is crucial to introduce augmentations.

\paragraph{InfoNCE} Many conventional methods for self-supervised representation learning leverage \mbox{InfoNCE} loss~\citep{oord2019infoNCE} in conjunction with a similarity measure $ T(\blankarg, \blankarg) $ to formulate the following contrastive objective~\citep{tschannen2020on_DIM, he2020MoCo, chen2020SimCLR}:
\[
    \mathcal{L}_\textnormal{InfoNCE} = -\left[ \expect_{\proba^{+}} T(Q,K) - \log \expect_{\proba^{-}} \exp(T(Q,K)) \right],
    \qquad
    \hat{\mathcal{L}}_\textnormal{InfoNCE} = - \log \frac{e^{T(q, k^+)}}{\frac{1}{K} \sum_{i=1}^K e^{T(q, k_i^-)}},
\]
where $ \proba^+ $ and $ \proba^- $ denote the distributions of positive and negative pairs of keys $ K $ and queries $ Q $ correspondingly.
By selecting $ \proba^+ = \proba_{f(X'),f(X)+Z} $ and $ \proba^- = \mproba{f(X')}{f(X)+Z} $, we recover the Donsker-Varadhan bound~\eqref{eq:Donsker_Varadhan_MI} for our infomax objective $ I(f(X');f(X) + Z) $.
However, note that in~\eqref{eq:Donsker_Varadhan_MI} and~\eqref{eq:NWJ_MI} the supremum is taken over all measurable functions.
In contrast, non-infomax methods typically employ a \emph{separable critic}: $ T(q,k) = \dotprod{\phi(q)}{\psi(k)} $, where $ \phi $ and $ \psi $ are \emph{projection heads}~\citep{tschannen2020on_DIM, chen2020SimCLR}.
In special cases $ \phi, \psi = \Id $,
so similarity of representations is measured via a plain dot product.
%Sometimes it is even assumed $ \phi, \psi = \Id $.

Despite this significant difference, our distribution matching paradigm allows us to drop the supremum in~\eqref{eq:Donsker_Varadhan_MI} and~\eqref{eq:NWJ_MI} and establish a direct connection between DIM and traditional non-infomax contrastive SSRL methods:

\begin{theoremE}[Dual form of Gaussian distribution matching]
    \label{theorem:Gaussian_distribution_matching_dual_form}
    Under the conditions of~\cref{theorem:Gaussian_distribution_mathcing},
    \begin{gather*}
        I(f(X');f(X) + Z) \geq
        %\expect_{\jproba{f(X')}{f(X)+Z}}
        \expect_{\proba^+}
        \left[T_{\normal(0, \sigma^2 \identity)}^*\right] - \log
        %\expect_{\mproba{f(X')}{f(X)+Z}}
        \expect_{\proba^-}
        \left[\exp\left(T_{\normal(0, \sigma^2  \identity)}^*\right)\right], \\
        T_{\normal(0, \sigma^2  \identity)}^*(x,y) = \frac{\| y \|^2}{2 (1 + \sigma^2)} - \frac{\|y - x\|^2}{2 \sigma^2} = \frac{1}{\sigma^2} \left( \dotprod{x}{y} - \frac{\| x \|^2 + \| y \|^2 / (1 + \sigma^2)}{2} \right),
    \end{gather*}
    with the equality holding precisely when $ f $ is weakly invariant and $ f(X) \sim \normal(0, \identity) $.
\end{theoremE}

\begin{proofE}
    By~\cref{remark:PMI_maximizes_bounds} we know that the equality holds if and only if $ T_{\normal(0, \sigma^2  \identity)}^*(x,y) = \PMI_{f(X'),f(X) + Z}(x,y) + \const $.
    
    Now, for independent $ Y \sim \normal(0, \identity) $ and $ Z \sim \normal(0, \sigma^2 \identity) $
    \begin{align*}
        \PMI_{Y,Y + Z}(x,y) &= \log \PDF_{Y+Z \mid Y}(y \mid x) - \log \PDF_{Y+Z}(y) = \\
        &= \frac{\| y \|^2}{2 (1 + \sigma^2)} - \frac{\|y - x\|^2}{2 \sigma^2} + \frac{d}{2} \log \left(1 + \frac{1}{\sigma^2} \right) = \\
        &= T_{\normal(0, \sigma^2  \identity)}^*(x,y) + I(Y;Y + Z).
    \end{align*}
    Thus, the equality holds if and only if $ I(f(X');f(X) + Z) = \frac{d}{2} \log (1 + 1/\sigma^2) $,
    which, in turn, holds precisely when $ f $ is weakly invariant and $ f(X) \sim \normal(0, \identity) $ (see~\cref{theorem:Gaussian_distribution_mathcing}).
\end{proofE}

Note that $ \dotprod{x}{y} $ is widely used as a similarity measure, $ \sigma^2 $ can be interpreted as temperature,
and the remaining part of the expression serves as a regularization term.

\paragraph{Covariance-based methods}
Barlow Twins~\citep{zbontar2021barlow} and VICReg~\citep{bardes2021vicreg} objective functions can be traced to information-theoretic terms either through Information Bottleneck~\citep{
tishby1999information%,
%tishby2015bottleneck_principle
} or multi-view InfoMax~\citep{Federici2020Learning} principles:
\[
    h(f(X') \mid X) - \lambda \, h(f(X')) \to \min \quad (\text{BarlowTwins})
\]
\[
    I(f(X');X') \geq h(f(X')) +
    \expect [ \log q(f(X') \mid X'')]
    \to \max \quad (\text{VICReg})
\]
where $ \lambda > 0 $, $ X \to X'' $ is a separate augmentation path, independent of $ X \to X' $,
and $ q $ is a probability density function of some distribution.
In both cases, the representation entropy $ h(f(X')) $ comes into play.
Recall that normal distribution is the maximum entropy distribution given first two moments (\cref{theorem:Gaussian_maximizes_entropy}).
As both methods employ covariance restriction terms in the respective objective functions, one can recover these methods by assuming the normality of $ f(X') $. 

\section{Experiments}
\label{section:experiments}

In this section, we evaluate our distribution matching approach on several datasets and downstream tasks.
To assess the quality of the embeddings, we solve downstream classification tasks and calculate clustering scores.
To explore the relation between the magnitude of injected noise and the quality of DM, a set of statistical normality tests is employed.
For the experiments requiring numerous evaluations or visualization, we use MNIST handwritten digits dataset~\cite{lecun2010mnist}.
For other experiments, we use CIFAR-10, CIFAR-100 datasets~\citep{krizhevsky2009CIFAR} and ImageNet~\citep{russakovsky2015imagenet}.

\paragraph{Multivariate normality and uniformity tests}

The key part of the experimental pipeline is to estimate how much the distribution of embeddings acquired via the proposed method is similar to the multivariate normal or uniform distribution.
%This paper suggests considering various statistical tests to verify the ``multivariate normality`` of obtained embeddings that will be essential for their subsequent usage, such as classification or generative tasks that described in next parts of this sections.
%tests for the multivariate normal distributions
To do this, we leverage D’Agostino-Pearson~\citep{dagostino1971normality_test, dagostino1973normality_test}, Shapiro-Wilk~\citep{shapiro_wilk2022} univariate tests, and Henze-Zirkler~\citep{HenzeZirkler1990,HenzeZirkler2006matlab} multivariate test.
To extend the univariate tests to higher dimensions, we utilize the fundamental property of the Gaussian distribution, which is the normality of any linear projection.

In practice, we sample random projectors from a $ d $-dimensional sphere to perform a univariate normality test.
We also employ bootsrapping with small subsampling size to get low-variance averaged $ p $-values and to smooth the transition from low to high $ p $-values.
%The correct approach would be sampling a batch of random linear projectors and combining $ p $-values of each univariate test.
Finally, we reduce the uniform distribution case to the Gaussian via the probability integral transform, see~\cref{subsection:general_ditribution_matching}.
We report the results of the tests in~\cref{figure:MNIST_capacity_plot}.
We also visualize the two-dimensional embeddings of the MNIST dataset in~\cref{figure:MNIST_embeddings_visualization}.
Essentially similar experiments are also conducted on CIFAR-10 dataset in~\cref{appendix:CIFAR}.

% \begin{table}[H]
%     \centering
%     \begin{tabular}{rccccc}
%         \toprule
%         %Dataset, method & D’Agostino-Pearson & Henze-Zirkler & Shapiro-Wilk\\ \midrule
%         Dataset & $ \sigma=0.1 $ & $ \sigma=0.01 $ & $ \sigma=0.001 $ \\
%         \midrule
%         MNIST &  &  &  \\ 
%         CIFAR-10 &  &  &  \\ 
%         CIFAR-100 &  &  &  \\ 
%         \bottomrule
%     \end{tabular}
%     \label{table:clusterization_scores}
%     \caption{Normality and uniformity scores}
% \end{table}

\paragraph{Classification and clustering}
To explore the trade-off between the magnitude of the injected noise and the quality of representations, we evaluate various clustering metrics and perform downstream classification using conventional ML methods, such as Gaussian na\"{i}ve Bayes, $ k $-nearest neighbors and shallow multilayer perceptron.
In order to numerically measure the quality of clustering, we compute %Calinski–Harabasz \citep{Calinski_Harabasz_article}, Davies–Bouldin \citep{Davies_Bouldin_article} indexes and
Silhouette score~\citep{Silhouettes_article}.
%The ROC AUC and Silhouette scores are reported in~\cref{figure:MNIST_capacity_plot}.
The results are reported in~\cref{figure:MNIST_capacity_plot}.

\begin{figure*}[tb!]
    \centering
    \small
    \input{./gnuplottex/output-gnuplottex-fig1.tex}
    
    \caption{Results for MNIST dataset in the Gaussian DM setup for $ d = 2 $ with varying capacity $ C = \frac{d}{2} \log \left( 1 + 1/\sigma^2 \right) $,
    measured in nats (units of information based on natural logarithms)
    .
    The dotted line denotes the minimal capacity required to preserve the information about the class labels in $ f(X) + Z $.
    The dashed line represents the upper bound on the mutual information~\eqref{eq:Gaussian_embeddings_MI_upper_bound}.
    We run $ 5 $ experiments for each point and report mean values and $ 99\% $ asymptotic confidence intervals.
    InfoNCE loss is used to approximate~\eqref{eq:Donsker_Varadhan_MI}.}
    \label{figure:MNIST_capacity_plot}
    %\vskip 0.2in
\end{figure*}

\begin{figure}[!hbt]
    \centering
    \begin{subfigure}{0.4\textwidth}
        \centering
        \includegraphics[width=\linewidth]{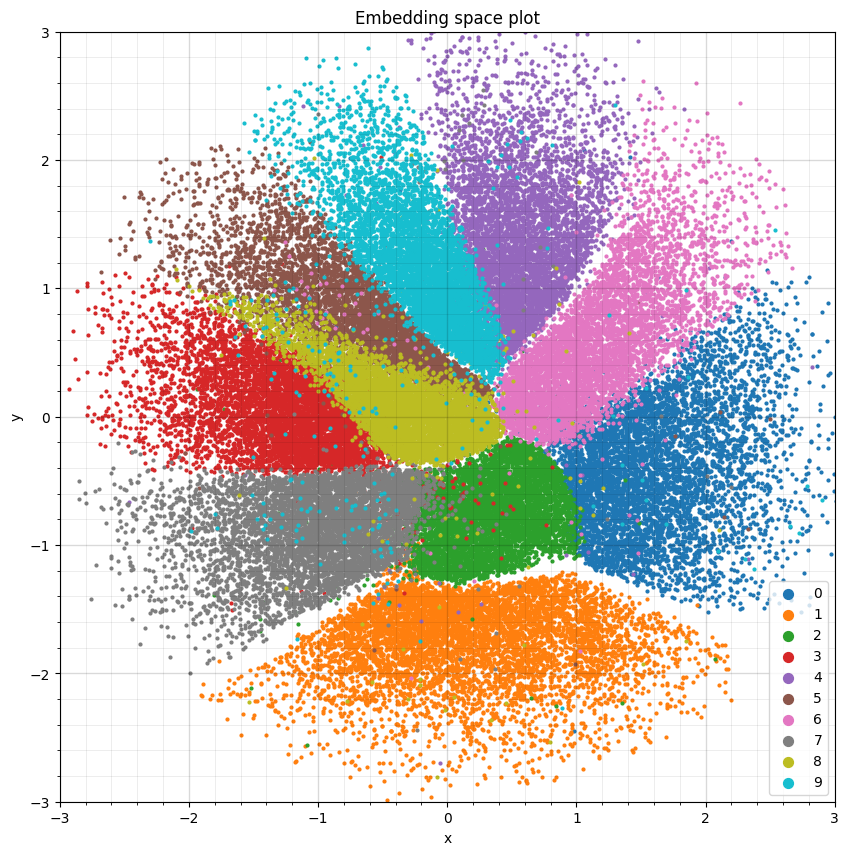}
        \caption{Normal distribution, $ \sigma = 0.1 $}
        \label{subfigure:MNIST_embeddings_visualization_Gaussian}
    \end{subfigure}
    \begin{subfigure}{0.4\textwidth}
        \centering
        \includegraphics[width=\linewidth]{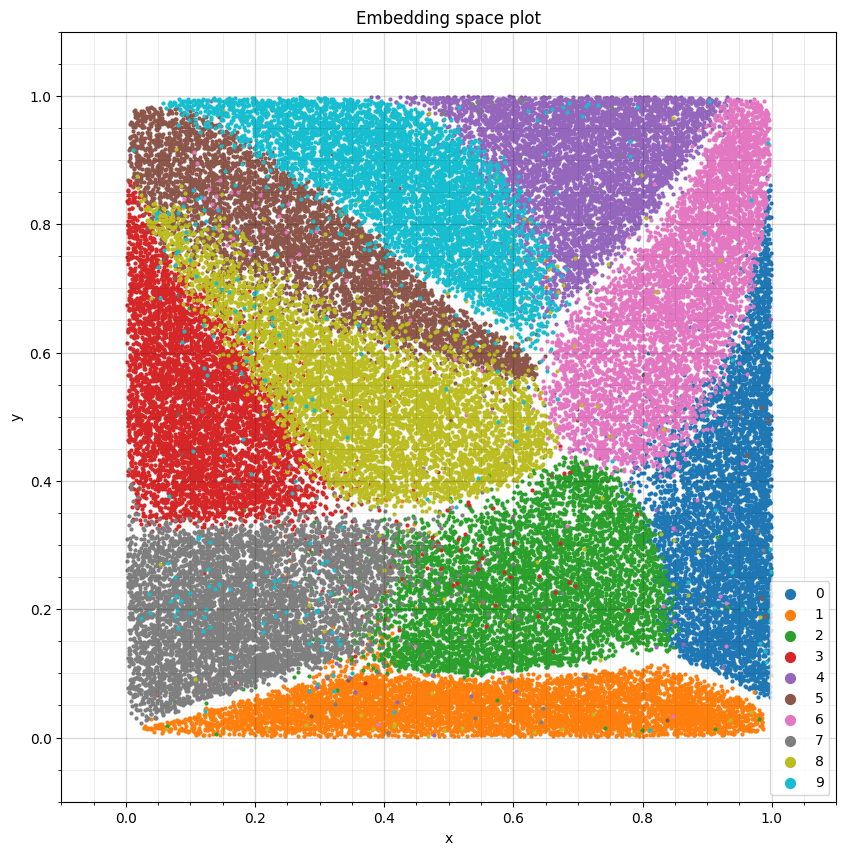}
        \caption{Uniform distribution, $ \varepsilon = 0.05 $}
        \label{subfigure:MNIST_embeddings_visualization_uniform}
    \end{subfigure}
    \caption{Visualization of two-dimensional representations of the MNIST handwritten digits dataset.}
    \label{figure:MNIST_embeddings_visualization}
\end{figure}

% updated values:
% SimCLR
% | $\sigma=0.0$ | 90.40 $\pm$ 0.10 | 99.72 $\pm$ 0.03 |
% | $\sigma=0.1$ | 90.27 $\pm$ 0.12 | 99.72 $\pm$ 0.07 |
% | $\sigma=0.3$ | 90.10 $\pm$ 0.18 | 99.73 $\pm$ 0.04 |
% | $\sigma=0.5$ | 90.02 $\pm$ 0.22 | 99.69 $\pm$ 0.03 |
% VICReg
% | $\sigma=0.0$ | 90.31 $\pm$ 0.22 | 99.70 $\pm$ 0.05 |
% | $\sigma=0.1$ | 90.31 $\pm$ 0.17 | 99.69 $\pm$ 0.02 |
% | $\sigma=0.3$ | 90.22 $\pm$ 0.11 | 99.70 $\pm$ 0.05 |
% | $\sigma=0.5$ | 90.08 $\pm$ 0.19 | 99.62 $\pm$ 0.05 |
\begin{table}[ht!]
    \centering
    \caption{
        Linear probing: top-1 \& top-5 accuracy (in \%) on CIFAR-10/100 under noise injection.
    }
    \begin{tabular}{lcccc}
        \toprule
        & \multicolumn{2}{c}{CIFAR-10} & \multicolumn{2}{c}{CIFAR-100} \\
        \cmidrule{2-5}
        & top-1 & top-5  & top-1 & top-5 \\
        \midrule
        SimCLR & 90.83 & 99.76 & 65.64 & 89.91   \\
        SimCLR ${\sigma=0.1}$ & 90.96 & 99.72 & 67.03 & 90.49 \\
        SimCLR ${\sigma=0.3}$ & 91.56 & 99.77 & 65.72 & 89.76 \\
        SimCLR ${\sigma=0.5}$ & 90.51 & 99.74 & 65.58 & 89.56 \\
        \midrule        
        VICReg & 90.63 & 99.67 & 65.71 & 88.96 \\
        VICReg ${\sigma=0.1}$ & 91.09 & 99.68 & 68.92 & 90.50 \\
        VICReg ${\sigma=0.3}$ & 90.75 & 99.61 & 67.31 & 89.89 \\
        VICReg ${\sigma=0.5}$ & 91.02 & 99.75 & 66.52 & 89.66 \\
        \bottomrule
    \end{tabular}
    \label{table:cifar-probing}
\end{table}

We also verify noise injection does not affect typical methods for SSRL such as SimCLR~\citep{chen2020SimCLR} and VICReg ~\citep{bardes2021vicreg}. To this end, we train both methods on CIFAR-10 and CIFAR-100 datasets with varying degree of noise. The linear probing performance only slightly drops when noise magnitude is increased across $\sigma$ values $(0.0, 0.1, 0.3, 0.5)$ as seen in~\cref{table:cifar-probing}.
%In~\cref{appendix:CIFAR}, we also use CIFAR-10 dataset to conduct experiments, similar to~\cref{figure:MNIST_capacity_plot,figure:MNIST_embeddings_visualization}.
Finally, to validate this behavior on a large-scale dataset, we pre-train VICReg with noise injection on ImageNet-100 and ImageNet-1k during 100 epochs. Linear probing results on respective datasets are reported in~\cref{table:imagenet-probing} below. See~\cref{appendix:ImageNet} for implementation and training details.

\begin{table}[ht!]
    \centering
    \caption{
        \mbox{Linear probing: top-1 \& top-5 accuracy (in \%) on ImageNet under noise injection (VICReg).}
    }
    \begin{tabular}{lcccc}
        \toprule
        & \multicolumn{2}{c}{ImageNet-100} & \multicolumn{2}{c}{ImageNet-1k} \\
        \cmidrule(lr){2-3}\cmidrule(lr){4-5}
        $\sigma$ & top-1 & top-5  & top-1 & top-5 \\
        \midrule
         0 & 72.18 $\pm$ 0.40 &  92.02 $\pm$ 0.12 & 67.41 $\pm$ 0.17 & 87.43 $\pm$ 0.08 \\
         0.05 & 72.27 $\pm$ 0.38 & 91.99 $\pm$ 0.18 & 67.29 $\pm$ 0.20 & 87.47 $\pm$ 0.06 \\
         0.1 & 72.07 $\pm$ 0.27 & 91.65 $\pm$ 0.13  & 67.30 $\pm$ 0.13 & 87.43 $\pm$ 0.02 \\
         0.2 & 71.68 $\pm$ 0.50 & 91.61 $\pm$ 0.24 &  67.19 $\pm$ 0.12 & 87.32 $\pm$ 0.09 \\
        \bottomrule
    \end{tabular}
    \label{table:imagenet-probing}
\end{table}

An intuition behind the observed slight accuracy drop lies in that noise injection limits the bandwidth of representations, an effect similarly observed in \citep{lavoie2022simplicial} and resolved by increasing the dimensionality of representations.
For a fair comparison with the base method, we keep the original dimensionality (512 and 2048 for ResNet-18 and ResNet-50, respectively).
% Similar experiments have been conducted for ImageNet-100 and ImageNet datasets in~\cref{appendix:ImageNet}.

%\begin{table}[H]
%    \centering
%    \begin{tabular}{lccccc}
%        \toprule
%        Classificator & ROC-AUC & f-score &  \\ \midrule
%         KNN &  &  &  \\ 
%         2-layer perceptron &  &  &  \\ 
%         random forest &  &  &  \\ 
%        \bottomrule
%    \end{tabular}
%    \label{table:classification_scores}
%    \caption{Classification scores for MNIST}
%\end{table}
%classification

%\begin{table}[H]
%    \centering
%    \begin{tabular}{lccccc}
%        \toprule
%        Dataset & Calinski–Harabasz $ \uparrow $ & Davies–Bouldin $ \downarrow $ & Silhouette score $ \uparrow $ \\
%        \midrule
%        MNIST &  &  &  \\ 
%        CIFAR-10 &  &  &  \\ 
%        CIFAR-100 &  &  &  \\ 
%        \bottomrule
%    \end{tabular}
%    \label{table:clusterization_scores_2}
%    \caption{Clusterization scores}
%\end{table}

\paragraph{Generation}
Generative adversarial networks~\citep{goodfellow2014GAN} produce samples from the underlying latent distribution, which is usually Gaussian.
A common approach to make the generation not purely random is to introduce a conditioning vector during the generation~\citep{mirza2014cGAN, larsen2016GAN_VAE}.
However, the distribution of the conditioning vector is usually unknown except for specific cases, which hinders the unconditional (or partially conditional) generation via the same model.
Thus, a model converting data to the corresponding conditioning vectors with known prior distribution is of great use.
In~\cref{appendix:cGAN}, we plug our encoder into a conditional GAN and perform conditional and unconditional generation.
%As long as our method yields normally distributed embeddings, we expect that generating samples from our representation space will be stable. 

\section{Discussion}
\label{section:discussion}

%Representation learning via information-theoretic-based approaches is a novel and prospective area of this topic in modern machine learning. This paper proposes the approach for effective distribution matching via Noise-injected Deep InfoMax approach. We demonstrate the power of this approach via theoretical justifications and various experiments with different target distributions on classification and generation (?) downstream tasks.
%Our method directly depends on the noise level injected at the output of an encoder part, so exploring the dependence between the noise level and the quality of distribution matching are among main goals of the conducted experiments. Considering various datasets, model architectures the trade-off between noise level and power of expression (measured via the quality on downstream task) establishes that high noise corresponds to decent distribution matching, however it brings some metric degradation on downstream tasks (classification).

%% Ваня, тут нужно еще дополнить по результатам экспериментов...

%Finally, it is worth noting that the novel mutual information-based approach of constructing embeddings with desired distribution is proposed, justified and fruitfully probed in experiments. The obtained embeddings endure statistical tests, collect in compact clusters in classification-related tasks and facilitate 
%The method makes a contribution to unsupervised learning.

In this paper, we have proposed and thoroughly investigated a novel and efficient approach to the problem of distribution matching of learned representations.
Our technique falls into a well-established family of Deep InfoMax self-supervised representation learning methods,
and does not require solving min-max optimization problems or employing generative models ``post-hoc'' to match the distributions.
The proposed approach is grounded in the information theory, which allows for a rigorous theoretical justification of the method.
In our work, we also explore the possibility of applying our technique to other popular methods for unsupervised and self-supervised representation learning.
%As a result, we claim that the proposed approach can be used in conjunction with any other method for SSRL, given the latter can be formulated in terms of mutual information or entropy maximization.
Consequently, we assert that the proposed approach can be utilized in conjunction with any other method for SSRL, provided that it can be formulated in terms of mutual information or entropy maximization.

To assess the quality of the representations yielded by our method, we (a) visualize the embeddings and run normality tests, and (b) solve a set of downstream tasks in various experimental setups.
The results indicate the following:
\begin{enumerate}
    \item Increasing the noise magnitude facilitates better distribution matching, but only up to a certain point; beyond that, the entire representation learning process begins to deteriorate due to insufficient information being transmitted through the noisy channel.
    \item Experiments with MNIST and low-dimensional embeddings indicate the existence of a moderate trade-off between the magnitude of the injected noise and the quality in downstream classification tasks.
    However, experiments with other infomax-related methods for SSRL and higher embedding dimensionality suggest this influence being negligible.
    \item Embeddings acquired via the proposed method can be used to condition generative models, allowing both conditional and unconditional generation due to the distribution of the conditioning vector being known.
\end{enumerate}

\paragraph{Future work}
As for further research, we consider elaborating on the dual formulation of infomax-based distribution matching.
\cref{theorem:Gaussian_distribution_matching_dual_form} suggests that Gaussian DM can be achieved through a specific choice of the critic network $ T(x,y) $.
We plan extending this result to other distributions.
Additionally, we consider conducting further experiments with generative models to comprehensively assess the quality of conditional and unconditional generation, utilizing conditioning vectors obtained via the proposed method.

%As for further research, we consider testing the developed approach on larger datasets (for instance, ImageNet). Another important direction is to apply embeddings learned via our method to generative modelling tasks. Our algorithm allows to implicitly choose the distribution of a latent representations that directly defines the appropriate latent space of information-based low-dimensional embeddings (according to the desired latent distribution) that can be employed in various practical applications. Additionally, the developed methodology enables to compare the quality of generative models (for instanse, as prior distribution in diffusion model based approaches as on of the SoTA model for this downstream problem) depending on the choice of the desired embedding distribution inside them. In our opinion, it is one of perspective directions related to this topic.

%Andrew: The developed methodology allows to construct meaningful feature space, which contains considerable representations of the data with desired distribution. It is applicable to tasks of generative modeling.

% \section{Conclusion}
% A novel mutual information-based approach of constructing embeddings with desired distribution is proposed, justified and fruitfully probed in experiments. The obtained embeddings endure statistical tests, collect in compact clusters in classification-related tasks and facilitate 
% The method makes a contribution to unsupervised learning.

\bibliography{iclr2025_conference}
\bibliographystyle{iclr2025_conference}

\appendix

\section{Complete proofs}
\label{appendix:proofs}

\printProofs

\section{Supplementary results on distribution matching}
\label{appendix:distribution_matching}

In this section, we explore the connection between the proposed infomax objective and the problem of distribution matching, formulated in terms of the Kullback-Leibler divergence.
%We provide additional claims and proofs showing that maximization of $ I(f(X'); f(X) + Z) $ entails 

\subsection*{Gaussian distribution matching}

\begin{lemma}
    \label{lemma:upper_bound_DKL_Gaussian}
    %The gap in~\cref{theorem:Gaussian_distribution_mathcing} bounds the following Kullback-Leibler divergence:
    Assume the conditions of~\cref{theorem:Gaussian_distribution_mathcing} are satisfied, then for Gaussian distribution matching, we have 
    %Assuming the conditions of~\cref{theorem:Gaussian_distribution_mathcing} are satisfied, for Gaussian distribution matching:
    \[
            \DKL{f(X) + Z}{\normal(0, (1 + \sigma^2) \identity)} \leq \frac{d}{2} \log \left( 1 + \frac{1}{\sigma^2} \right) - I(f(X'); f(X) + Z),
    \]
    with equality holding exactly when $ f $ is weakly invariant and $ f(X) \sim \normal(0, \identity) $. 
\end{lemma}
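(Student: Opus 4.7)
The plan is to reduce the claim to an identity via \cref{lemma:infomax_objective_decomposition}, then discard a non-negative term. Starting from that lemma,
\[
I(f(X'); f(X)+Z) = h(f(X)+Z) - h(Z) - I(f(X)+Z; f(X) \mid f(X')).
\]

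The crucial step is re-expressing $h(f(X)+Z)$ as $h(\normal(0,(1+\sigma^2)\identity)) - \DKL{f(X)+Z}{\normal(0,(1+\sigma^2)\identity)}$. This is a mild variant of \cref{theorem:Gaussian_maximizes_entropy}, because the true covariance of $f(X)+Z$ need not equal $(1+\sigma^2)\identity$ --- the assumption $\variance(f(X)_i)=1$ only fixes the diagonal. However, the target being \emph{isotropic} yields
\[
\trace\bigl((1+\sigma^2)^{-1}\identity \cdot \cov(f(X)+Z)\bigr) = (1+\sigma^2)^{-1}\trace\bigl(\cov(f(X)+Z)\bigr)=d,
\]
so the expectation $\expect[X^T \Sigma_0^{-1} X]$ appearing in the proof of the cited theorem still reduces to $d$, contributing the needed $d/2$ after the factor $1/2$, and the identity above remains valid in the present generality.

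Substituting this together with $h(\normal(0,(1+\sigma^2)\identity))-h(Z)=\frac{d}{2}\log(1+1/\sigma^2)$ produces the clean identity
\[
\DKL{f(X)+Z}{\normal(0,(1+\sigma^2)\identity)}=\frac{d}{2}\log\!\left(1+\frac{1}{\sigma^2}\right)-I(f(X'); f(X)+Z)-I(f(X)+Z; f(X)\mid f(X')),
\]
and dropping the last, non-negative term gives the claimed inequality. The dropped term vanishes precisely when $f$ is weakly invariant to $X \to X'$, by \cref{lemma:embedding_invariance}. The additional condition $f(X)\sim\normal(0,\identity)$ stated in the lemma corresponds to the left-hand KL-divergence additionally vanishing: by independence of $Z \sim \normal(0,\sigma^2\identity)$ and factorization of characteristic functions, $f(X)+Z\sim\normal(0,(1+\sigma^2)\identity)$ if and only if $f(X)\sim\normal(0,\identity)$.

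The main obstacle is step 2 --- applying \cref{theorem:Gaussian_maximizes_entropy} against a target Gaussian whose covariance differs from the true covariance of the argument. Isotropy of the target, combined with the fixed marginal variances of $f(X)$, is exactly what makes the trace calculation collapse and the classical computation transport cleanly to the present setting.
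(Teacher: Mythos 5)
Your proof of the inequality is correct, and the step you flag as the crux is indeed valid for the reason you give: the cross-entropy $-\expect_p[\log \phi]$ against the isotropic Gaussian $\normal(0,(1+\sigma^2)\identity)$ depends only on $\trace\bigl((1+\sigma^2)^{-1}\expect[XX^T]\bigr)$, which the centering and unit marginal variances pin to $d$, so $h(f(X)+Z)=h(\normal(0,(1+\sigma^2)\identity))-\DKL{f(X)+Z}{\normal(0,(1+\sigma^2)\identity)}$ holds even though the covariance of $f(X)+Z$ need not be isotropic. This is a genuinely different route from the paper's: the paper applies \cref{theorem:Gaussian_maximizes_entropy} with the true mean and covariance $(m,\Sigma)$ of $f(X)+Z$ and then bounds $h(\normal(m,\Sigma))\le d\,h(\normal(0,1+\sigma^2))$ by subadditivity of entropy (tight iff $\Sigma$ is diagonal). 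Your version is arguably the more faithful one, since the paper's chain actually bounds $\DKL{f(X)+Z}{\normal(m,\Sigma)}$, which is smaller than the stated left-hand side by $\DKL{\normal(m,\Sigma)}{\normal(0,(1+\sigma^2)\identity)}$, whereas your computation addresses the stated quantity directly — and exactly.

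That exactness is also where your proposal parts ways with the statement, and you should not gloss over it. Your identity shows that equality holds if and only if $I(f(X)+Z;f(X)\mid f(X'))=0$, i.e.\ (by \cref{lemma:embedding_invariance}) iff $f$ is weakly invariant; Gaussianity of $f(X)$ is neither needed nor implied (take $X=X'$ and $f(X)$ uniform with zero mean and unit variances: your identity gives equality). Your closing remark that $f(X)\sim\normal(0,\identity)$ "corresponds to the left-hand KL additionally vanishing" is a different assertion from the one in the lemma, which characterizes tightness of the inequality, not vanishing of the KL. Since your identity is an unconditional computation, it is the "only if" half of the stated equality condition that cannot survive; the honest conclusion of your argument is a sharper equality criterion (weak invariance alone), and you should state that explicitly rather than reinterpret the lemma. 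For comparison, the paper's route picks up the extra condition $\cov(f(X))=\identity$ from tightness of its subadditivity step precisely because it is bounding the moment-matched KL, and that condition is then over-read as full normality of $f(X)$.
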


\begin{proof}
    From~\cref{lemma:infomax_objective_decomposition} we obtain
    \[
        I(f(X'); f(X) + Z) = h(f(X) + Z) - h(\normal(0,\sigma^2 \identity)) - I(f(X) + Z; f(X) \mid f(X')).
    \]
    Using~\cref{theorem:Gaussian_maximizes_entropy}, we can rewrite the first term, which yields
    \begin{multline*}
        I(f(X'); f(X) + Z) = h(\normal(m, \Sigma)) - h(\normal(0,\sigma^2 \identity)) \\ - \DKL{f(X) + Z}{\normal(m, \Sigma)} - I(f(X) + Z; f(X) \mid f(X')),
    \end{multline*}
    where $ m $ and $ \Sigma $ are the mean and covariance matrix of $ f(X) + Z $.

    To bound the KL-divergence, note that the conditional mutual information is non-negative:
    \[
         \DKL{f(X) + Z}{\normal(m, \Sigma)}
         \leq 
         h(\normal(m, \Sigma)) - h(\normal(0,\sigma^2 \identity)) - I(f(X'); f(X) + Z).
    \]
    Equality holds exactly when $ I(f(X) + Z; f(X) \mid f(X')) = 0 $, which is equivalent to $ f $ being weakly invariant (see~\cref{lemma:embedding_invariance}).
    %According to~\cref{lemma:independent_sum_entropy}, $ I(f(X) + Z; f(X) \mid f(X')) = 0 $ if and only if there exists a function $ g $ such that $ f(X) = g(f(X')) $.
    
    Next, we estimate the difference between the entropies by observing that
    \[
        h(\normal(m, \Sigma)) \leq \sum_{i=1}^d h(\normal(m_i, \variance(f(X)_i) + \sigma^2)) = d \cdot h(\normal(0, 1 + \sigma^2)),
    \]
    with the equality holding if and only if $ \Sigma $ is diagonal, which implies $ \Sigma = \identity $ since $ \variance(f(X)_i) = 1 $ for all $i \in \{1, \ldots, d\}$.
    Finally,
    \[
        d \cdot h(\normal(0, 1 + \sigma^2)) - h(\normal(0,\sigma^2 \identity)) = \frac{d}{2} \left[ \log(1 + \sigma^2) - \log \sigma^2 \right] = \frac{d}{2} \log 
        \left( 1 + \frac{1}{\sigma^2} \right),
    \]
    which proves the claimed inequality.
\end{proof}

\begin{lemma}
    \label{lemma:DKL_embedding_standard_Gaussian}
    Under the conditions of~\cref{theorem:Gaussian_distribution_mathcing},
    the following holds:
    \begin{multline*}
        \DKL{f(X) + Z}{\normal(0, (1 + \sigma^2) \identity)}
        \leq
        \DKL{f(X)}{\normal(0, \identity)}
        = \\ =
        \DKL{f(X) + Z}{\normal(0, (1 + \sigma^2) \identity)} +
        I(Z; f(X) + Z) -
        \frac{d}{2} \log \left( 1 + \sigma^2 \right).
    \end{multline*}
\end{lemma}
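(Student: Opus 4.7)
The plan is to establish the equality by direct computation and then derive the left-hand inequality as an immediate consequence of the data processing inequality.

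First, I observe that both second moments match those of the corresponding Gaussian references: since $\expect f(X) = 0$ and $\variance(f(X)_i) = 1$, we have $\expect \|f(X)\|^2 = d$, and by independence of $f(X)$ and $Z \sim \normal(0, \sigma^2 \identity)$, also $\expect \|f(X) + Z\|^2 = d(1 + \sigma^2)$. Substituting these into the definition of the KL divergence (or, equivalently, invoking~\cref{theorem:Gaussian_maximizes_entropy}), the cross-entropy integrals simplify to the entropies of the reference Gaussians, so both divergences collapse into clean entropy differences:
\begin{align*}
    \DKL{f(X)}{\normal(0, \identity)} &= h(\normal(0, \identity)) - h(f(X)), \\
    \DKL{f(X) + Z}{\normal(0, (1+\sigma^2) \identity)} &= h(\normal(0, (1+\sigma^2) \identity)) - h(f(X) + Z).
\end{align*}

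Subtracting these two identities, the closed-form Gaussian entropies give $h(\normal(0, (1+\sigma^2) \identity)) - h(\normal(0, \identity)) = \frac{d}{2} \log(1 + \sigma^2)$, while the independence of $Z$ and $f(X)$ yields $I(Z; f(X) + Z) = h(f(X) + Z) - h(f(X) + Z \mid Z) = h(f(X) + Z) - h(f(X))$. Assembling these pieces produces exactly the claimed identity.

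The left-hand inequality then follows directly from the data processing inequality for the Kullback-Leibler divergence (the statement cited earlier from Polyanskiy \& Wu): the additive-noise kernel $y \mapsto y + Z$ is a Markov kernel that transports $f(X)$ to $f(X) + Z$ and $\normal(0, \identity)$ to $\normal(0, (1+\sigma^2) \identity)$, so applying DPI to these two source distributions under the common kernel yields the bound. There is no real obstacle here; the only non-trivial step is noticing that the normalization conditions on $f(X)$ are strong enough to make the second moments match those of the reference Gaussians exactly, which is what lets both cross-entropy integrals evaluate to the entropies of the Gaussian references and produces the clean algebraic identity.
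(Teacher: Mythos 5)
Your proof is correct and follows essentially the same route as the paper's: both express the two divergences as entropy gaps via the Gaussian maximum-entropy identity, use $h(f(X)) = h(f(X)+Z \mid Z)$ and the mutual-information decomposition to relate them, and obtain the inequality from the data processing inequality under the additive-noise kernel. Your explicit check that the conditions $\expect f(X)=0$, $\variance(f(X)_i)=1$ make the second moments (rather than the full covariance) match the isotropic references is actually slightly more careful than the paper's direct invocation of \cref{theorem:Gaussian_maximizes_entropy}, which as stated requires the reference covariance to equal that of $f(X)$.
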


\begin{proof}
    The left-hand inequality follows directly from the data processing inequality for the Kullback-Leibler divergence (Theorem 2.17 in~\citep{polyanskiy2024information_theory}).
    
    To establish the right-hand side, we first apply \cref{theorem:Gaussian_maximizes_entropy}, and then use the independence of $f(X)$ and $Z$, yielding $h(f(X)) = h(f(X) + Z \mid Z)$. Thus, we have
    \begin{equation*}
        \DKL{f(X)}{\normal(0, \identity)}
        =
        h(\normal(0, \identity)) - h(f(X) + Z \mid Z).
    \end{equation*}

    Next, by using the definition of mutual information and again applying \cref{theorem:Gaussian_maximizes_entropy}, one can write
    \begin{multline*}
        h(f(X) + Z \mid Z) 
        = h(f(X) + Z) - I(f(X) + Z; Z) = \\
        = h(\normal(0, (1 + \sigma^2) \identity)) - \DKL{f(X) + Z}{\normal(0, (1+\sigma^2) \identity)} - I(f(X) + Z; Z).
    \end{multline*}
    % where $f(X) + Z \sim \normal(0, (1 + \sigma^2) \identity))$. 
    
    Finally, substituting this into the expansion for $\DKL{f(X)}{\normal(0, \identity)}$, and noting that $h(\normal(0, \identity)) - \normal(0, (1 + \sigma^2) \identity)) = -\frac{d}{2} \log(1 + \sigma^2)$ completes the proof.
\end{proof}

\begin{corollary}
    \label{corollary:upper_bound_DKL_embedding_stand_Gaussian}
    In the Gaussian distribution matching setup (\cref{theorem:Gaussian_distribution_mathcing}), we have
    \[
        \DKL{f(X)}{\normal(0, \identity)} \leq I(Z; f(X) + Z) - I(f(X'); f(X) + Z) - d \log \sigma.
    \]
\end{corollary}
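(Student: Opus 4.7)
The plan is to derive this corollary by combining the two preceding lemmas in the appendix, namely the upper bound on $\DKL{f(X) + Z}{\normal(0, (1+\sigma^2)\identity)}$ from \cref{lemma:upper_bound_DKL_Gaussian} and the identity relating $\DKL{f(X)}{\normal(0, \identity)}$ to $\DKL{f(X) + Z}{\normal(0, (1+\sigma^2)\identity)}$ from \cref{lemma:DKL_embedding_standard_Gaussian}. Both are already established under the hypotheses of \cref{theorem:Gaussian_distribution_mathcing}, so no new conceptual work is required.

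Concretely, I would first invoke the equality in \cref{lemma:DKL_embedding_standard_Gaussian}, which expresses
\[
    \DKL{f(X)}{\normal(0, \identity)} = \DKL{f(X) + Z}{\normal(0, (1+\sigma^2)\identity)} + I(Z; f(X) + Z) - \tfrac{d}{2} \log(1 + \sigma^2).
\]
Then I would substitute the upper bound from \cref{lemma:upper_bound_DKL_Gaussian} for the first term on the right-hand side, obtaining
\[
    \DKL{f(X)}{\normal(0, \identity)} \leq \tfrac{d}{2} \log\!\left(1 + \tfrac{1}{\sigma^2}\right) - I(f(X'); f(X) + Z) + I(Z; f(X) + Z) - \tfrac{d}{2} \log(1 + \sigma^2).
\]

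The only remaining step is an elementary algebraic simplification of the logarithmic terms. Writing
\[
    \tfrac{d}{2} \log\!\left(1 + \tfrac{1}{\sigma^2}\right) - \tfrac{d}{2} \log(1 + \sigma^2) = \tfrac{d}{2} \log \tfrac{(1+\sigma^2)/\sigma^2}{1+\sigma^2} = \tfrac{d}{2} \log \tfrac{1}{\sigma^2} = -d \log \sigma,
\]
and substituting back yields exactly the claimed inequality. There is no real obstacle here; the entire argument is a two-step chaining of the lemmas followed by a short cancellation, and I expect the write-up to be only a few lines long.
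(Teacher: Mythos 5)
Your proposal is correct and matches the paper's own proof, which likewise obtains the corollary by chaining \cref{lemma:upper_bound_DKL_Gaussian} with the identity in \cref{lemma:DKL_embedding_standard_Gaussian}. Your explicit cancellation $\tfrac{d}{2}\log(1+1/\sigma^2) - \tfrac{d}{2}\log(1+\sigma^2) = -d\log\sigma$ is exactly the elementary step the paper leaves implicit.
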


\begin{proof}
    This follows directly from combining~\cref{lemma:upper_bound_DKL_Gaussian} and~\cref{lemma:DKL_embedding_standard_Gaussian}.
\end{proof}

\subsection*{Uniform distribution matching}

\begin{lemma}
    \label{lemma:upper_bound_DKL_uniform}
    Let the conditions of~\cref{theorem:uniform_distribution_matching} hold, then the following bound applies for uniform distribution matching: 
    \[
            \DKL{f(X) + Z}{\uniform([-\varepsilon; 1 + \varepsilon]^d)} \leq d \log \left( 1 + \frac{1}{2 \varepsilon} \right) - I(f(X'); f(X) + Z),
    \]
    with the equality if and only if $ 1/\varepsilon \in \naturals $, $ f $ is weakly invariant, and %$ f(X) \sim \uniform(\{0, \varepsilon, \ldots, 1\}) $.
    $ f(X) \sim \uniform(A) $, where the set $A = \{0, 2\varepsilon, 4\varepsilon, \ldots, 1\}$ contains $(1/(2\varepsilon) + 1)$ elements.
\end{lemma}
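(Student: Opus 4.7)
The plan is to mirror the Gaussian argument from~\cref{lemma:upper_bound_DKL_Gaussian}, substituting the uniform maximum-entropy theorem for the Gaussian one. First, I would apply~\cref{lemma:infomax_objective_decomposition} to decompose
\[
    I(f(X'); f(X) + Z) = h(f(X) + Z) - h(Z) - I(f(X)+Z;\, f(X) \mid f(X')).
\]
Since $\supp f(X) \subseteq [0;1]^d$ and $\supp Z = [-\varepsilon;\varepsilon]^d$, we have $\supp(f(X) + Z) \subseteq [-\varepsilon; 1+\varepsilon]^d$, so~\cref{theorem:uniform_maximizes_entropy} lets me write
\[
    h(f(X)+Z) = h(\uniform([-\varepsilon;1+\varepsilon]^d)) - \DKL{f(X)+Z}{\uniform([-\varepsilon;1+\varepsilon]^d)}.
\]

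Next, I would substitute this expression into the decomposition and use the non-negativity of the conditional mutual information to isolate the KL term, obtaining an upper bound of the form $h(\uniform([-\varepsilon;1+\varepsilon]^d)) - h(Z) - I(f(X');f(X)+Z)$. A direct computation using $h(\uniform(S)) = \log \mu(S)$ gives
\[
    h(\uniform([-\varepsilon;1+\varepsilon]^d)) - h(Z) = d\log(1+2\varepsilon) - d\log(2\varepsilon) = d\log\!\left(1 + \tfrac{1}{2\varepsilon}\right),
\]
which yields the claimed inequality.

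For equality, two conditions must hold simultaneously: $I(f(X)+Z; f(X)\mid f(X')) = 0$, and $\DKL{f(X)+Z}{\uniform([-\varepsilon;1+\varepsilon]^d)} = 0$. The first is handled by~\cref{lemma:embedding_invariance}, which forces weak invariance of $f$. The second requires $f(X)+Z$ to be exactly uniform on $[-\varepsilon;1+\varepsilon]^d$. I expect this to be the main obstacle: it amounts to a deconvolution problem, namely that the convolution of the law of $f(X)$ with the uniform law on $[-\varepsilon;\varepsilon]^d$ equals the uniform law on $[-\varepsilon;1+\varepsilon]^d$.

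To handle this convolution equation, I would work coordinatewise (the product structure of $Z$ and of the target uniform law lets me reduce to $d=1$). In one dimension, requiring $(p_{f(X)} * \tfrac{1}{2\varepsilon}\mathbf{1}_{[-\varepsilon,\varepsilon]})(y) = \tfrac{1}{1+2\varepsilon}$ for all $y \in [-\varepsilon, 1+\varepsilon]$ means the measure of every interval $[y-\varepsilon, y+\varepsilon] \cap [0,1]$ under $f(X)$ is determined. Examining the boundary $y \to -\varepsilon^{+}$ forces the mass near $0$ to be concentrated at $0$ (an absolutely continuous density cannot satisfy the boundary condition, hinting that equality is only achieved by discrete laws or as a limit); propagating this structure by differentiating in $y$ on the interior gives $p_{f(X)}(y+\varepsilon) = p_{f(X)}(y-\varepsilon)$, so the law must be $2\varepsilon$-periodic and supported on a translate of $2\varepsilon \mathbb{Z}$. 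Combining with the constraint $\supp f(X) \subseteq [0,1]$ and the endpoint conditions at both $-\varepsilon$ and $1+\varepsilon$ forces $1/\varepsilon \in \naturals$ and $f(X) \sim \uniform(A)$ with $A = \{0, 2\varepsilon, 4\varepsilon, \ldots, 1\}$, yielding the stated equality characterization.
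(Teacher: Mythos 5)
Your proof of the inequality is essentially identical to the paper's: the same decomposition via~\cref{lemma:infomax_objective_decomposition}, the same application of~\cref{theorem:uniform_maximizes_entropy} to $h(f(X)+Z)$, the same use of non-negativity of the conditional mutual information, and the same entropy arithmetic for the constant. The only substantive difference is in the equality characterization. The paper handles it by the \emph{direct} computation: it convolves $\uniform(A)$ with $\uniform([-\varepsilon;\varepsilon]^d)$ and verifies that the result is exactly $\uniform([-\varepsilon;1+\varepsilon]^d)$, thereby exhibiting the distribution that attains the bound, but it does not argue uniqueness. You instead attack the \emph{converse} (deconvolution) direction: requiring $\proba(f(X)\in[y-\varepsilon,y+\varepsilon])$ to be constant over $y\in[-\varepsilon,1+\varepsilon]$, reading off the atom at $0$ from the boundary case, and propagating $2\varepsilon$-periodicity to force $f(X)\sim\uniform(A)$. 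This is a genuine addition -- it justifies the ``only if'' that the paper merely asserts -- though as written it is a sketch (the ``differentiate in $y$'' step presumes a density when the answer is discrete; the clean version works with the CDF identity $F(y+\varepsilon)-F((y-\varepsilon)^-)=\const$ directly). To make your argument fully closed you should also include the short direct verification that $\uniform(A)$ does achieve equality, since your uniqueness argument only shows that any attaining distribution must be $\uniform(A)$, not that one exists.
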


\begin{proof}
    Similarly to the proof of~\cref{lemma:upper_bound_DKL_Gaussian}, we use the decomposition of the infomax objective from~\cref{lemma:infomax_objective_decomposition}.
    Afterward, we apply~\cref{theorem:uniform_maximizes_entropy} to the term $h(f(X) + Z)$:
    \begin{multline*}
        I(f(X'); f(X) + Z) = h(\uniform([-\varepsilon;\varepsilon + 1]^d))
        - h(\uniform([-\varepsilon;\varepsilon]^d)) \\
        - \DKL{f(X) + Z}{\uniform([-\varepsilon;\varepsilon + 1]^d)}  - I(f(X) + Z; f(X) \mid f(X')).
    \end{multline*}
    Therefore, the KL-divergence can be bounded as:
    \begin{align*}
        \DKL{f(X) + Z}{\uniform([-\varepsilon;\varepsilon + 1]^d)}
        &\leq
        h(\uniform([-\varepsilon;\varepsilon + 1]^d))
        - h(\uniform([-\varepsilon;\varepsilon]^d)) - I(f(X'); f(X) + Z) \\
        &=
        d \log \left( 1 + \frac{1}{2 \varepsilon} \right) - I(f(X'); f(X) + Z),
    \end{align*}
    with equality achieved if and only if the mutual information between $ f(X) + Z $ and $ f(X) $ conditioned on $ f(X') $ is zero, which occurs precisely when $ f $ is weakly invariant (see~\cref{lemma:embedding_invariance}).

    Next, we show when the equality holds. 
    The probability density function of the sum of independent random vectors $ f(X) $ and $ Z $ is given by the convolution:
    \[
        \int_{\reals^d} \PDF_{f(X)}(x) \PDF_Z(z - x) dx
        =
        \prod_{i = 1}^d
        \int_{ z_i - \varepsilon }^{ z_i + \varepsilon } 
        \left(\frac{1}{1/(2 \varepsilon) + 1} \sum_{a \in A} \delta(z_i - a) \right) \frac{d x_i}{2\varepsilon}
        =
        \frac{1}{(1 + 2 \varepsilon)^d}
        \int_{ [- \varepsilon; 1 + \varepsilon]^d } dx.
    \]
    Here, we used the independence of the components, with $Z_i$ uniformly distributed on $[- \varepsilon, \varepsilon]$ and $f(X)_i$ uniformly distributed over the discrete set $A = \{0, 2\varepsilon, 4\varepsilon, \ldots, 1\}$.%, which has $|A| = 1 / (2 \varepsilon) + 1$ elements.

    Therefore, $(f(X) + Z) \sim \uniform([- \varepsilon; 1 + \varepsilon]^d)$. Given this, one can calculate the mutual information explicitly:
    \[
        I(f(X); f(X) + Z) = h(f(X) + Z) - h(Z) = d \log\left( 1 + \frac{1}{2 \varepsilon} \right).
    \]
    % the weak invariance of $f$ is employed.
    This concludes the proof.
\end{proof}

\begin{lemma}
    \label{lemma:DKL_embedding_uniform}
    Under the conditions of~\cref{theorem:uniform_distribution_matching},
    the following holds:
    \[
        \DKL{f(X)}{\uniform([0; 1]^d)}
        =
        \DKL{f(X) + Z}{\uniform([-\varepsilon; \varepsilon + 1]^d)} + I(Z; f(X) + Z) - d \log( 1 + 2 \varepsilon ).
    \]
\end{lemma}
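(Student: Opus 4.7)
The plan is to mirror the structure of the Gaussian analogue (\cref{lemma:DKL_embedding_standard_Gaussian}) almost verbatim, replacing \cref{theorem:Gaussian_maximizes_entropy} by \cref{theorem:uniform_maximizes_entropy} wherever it was used. The key observation is that both $f(X)$ and $f(X)+Z$ are supported on boxes (namely $[0;1]^d$ and $[-\varepsilon;1+\varepsilon]^d$), so \cref{theorem:uniform_maximizes_entropy} applies to each, and the ``$h - \DKL$'' decomposition is available for both sides of the identity.

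First I would use \cref{theorem:uniform_maximizes_entropy} on $f(X)$ with $S = [0;1]^d$ to write
\[
    \DKL{f(X)}{\uniform([0;1]^d)} = h(\uniform([0;1]^d)) - h(f(X)) = -h(f(X)),
\]
since $h(\uniform([0;1]^d)) = \log \mu([0;1]^d) = 0$. Next, by the independence of $f(X)$ and $Z$, $h(f(X)) = h(f(X)+Z \mid Z)$, which in turn equals $h(f(X)+Z) - I(Z; f(X)+Z)$ by the definition of mutual information. Substituting yields
\[
    \DKL{f(X)}{\uniform([0;1]^d)} = -h(f(X)+Z) + I(Z; f(X)+Z).
\]

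Now I would apply \cref{theorem:uniform_maximizes_entropy} a second time, this time to $f(X)+Z$, whose support is contained in $[-\varepsilon;1+\varepsilon]^d$. This gives
\[
    h(f(X)+Z) = h(\uniform([-\varepsilon;1+\varepsilon]^d)) - \DKL{f(X)+Z}{\uniform([-\varepsilon;1+\varepsilon]^d)},
\]
with $h(\uniform([-\varepsilon;1+\varepsilon]^d)) = d \log(1 + 2\varepsilon)$. Combining the two displays immediately produces the claimed identity.

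I do not anticipate a real obstacle here; the only subtlety is remembering that $h(\uniform([0;1]^d)) = 0$ while $h(\uniform([-\varepsilon;1+\varepsilon]^d)) = d\log(1+2\varepsilon)$, which is precisely what supplies the $-d\log(1+2\varepsilon)$ term in the statement. The structural parallel with \cref{lemma:DKL_embedding_standard_Gaussian} is exact, so the proof reduces to three short lines once the two applications of \cref{theorem:uniform_maximizes_entropy} are in place.
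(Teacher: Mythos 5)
Your proposal is correct and matches the paper's own proof essentially step for step: both apply \cref{theorem:uniform_maximizes_entropy} twice (once to $f(X)$ on $[0;1]^d$ and once to $f(X)+Z$ on $[-\varepsilon;1+\varepsilon]^d$), use independence to write $h(f(X)) = h(f(X)+Z \mid Z) = h(f(X)+Z) - I(Z;f(X)+Z)$, and finish by evaluating the two box entropies. No gaps.
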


\begin{proof}
    One can build on the reasoning from~\cref{lemma:DKL_embedding_standard_Gaussian} by applying~\cref{theorem:uniform_maximizes_entropy} to express the KL-divergence between $ f(X) $ and $ {\uniform([0; 1]^d)} $ as follows:
    \[
        \DKL{f(X)}{\uniform([0; 1]^d)}
        =
        h(\uniform([0; 1]^d)) - h(f(X) + Z \mid Z)
        = 
        - h(f(X) + Z \mid Z).
    \]
    Using~\cref{theorem:uniform_maximizes_entropy} again, the conditional entropy can be expressed as
    \[
        h(f(X) + Z \mid Z)
        = h(\uniform([-\varepsilon; \varepsilon + 1]^d)) - \DKL{f(X) + Z}{\uniform([-\varepsilon; \varepsilon + 1]^d)} - I(f(X) + Z; Z).
    \]
    To conclude it remains to calculate $h(\uniform([-\varepsilon; \varepsilon + 1]^d)) = d \log( 1 + 2 \varepsilon )$.
\end{proof}

\begin{corollary}
    \label{corollary:upper_bound_DKL_embedding_uniform}
    In the uniform distribution matching setup (\cref{theorem:uniform_distribution_matching}), we have
    \[
        \DKL{f(X)}{\uniform([0; 1]^d)} \leq I(Z; f(X) + Z) - I(f(X'); f(X) + Z) - d \log (2 \varepsilon).
    \]
\end{corollary}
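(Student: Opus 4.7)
The plan is to chain the two preceding lemmas and let the constants collapse. First, I would invoke \cref{lemma:DKL_embedding_uniform} to rewrite the target quantity as an identity:
\[
    \DKL{f(X)}{\uniform([0;1]^d)} = \DKL{f(X)+Z}{\uniform([-\varepsilon;\varepsilon+1]^d)} + I(Z;f(X)+Z) - d\log(1+2\varepsilon).
\]
This step trades the embedding-side KL for a noise-smoothed KL on the ``inflated'' hypercube, plus an informational correction and a purely deterministic entropy offset.

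Next, I would apply \cref{lemma:upper_bound_DKL_uniform} to upper-bound the first summand on the right-hand side, replacing $\DKL{f(X)+Z}{\uniform([-\varepsilon;1+\varepsilon]^d)}$ by $d\log\!\left(1+\tfrac{1}{2\varepsilon}\right) - I(f(X');f(X)+Z)$. Substituting this bound into the identity above gives
\[
    \DKL{f(X)}{\uniform([0;1]^d)} \leq d\log\!\left(1+\tfrac{1}{2\varepsilon}\right) - I(f(X');f(X)+Z) + I(Z;f(X)+Z) - d\log(1+2\varepsilon),
\]
and the two logarithmic terms combine as $d\log\tfrac{1+2\varepsilon}{2\varepsilon} - d\log(1+2\varepsilon) = -d\log(2\varepsilon)$, which yields exactly the claimed inequality.

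There is essentially no real obstacle here: the whole work is performed upstream by the maximum-entropy characterization of the uniform distribution and by the infomax decomposition used to prove the two lemmas. The only care needed is bookkeeping of signs and verifying the algebraic collapse $\log(1+\tfrac{1}{2\varepsilon}) - \log(1+2\varepsilon) = -\log(2\varepsilon)$; everything else follows by direct substitution.
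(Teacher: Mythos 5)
Your proof is correct and follows exactly the route the paper takes: combining \cref{lemma:DKL_embedding_uniform} with \cref{lemma:upper_bound_DKL_uniform} and simplifying $d\log\bigl(1+\tfrac{1}{2\varepsilon}\bigr)-d\log(1+2\varepsilon)=-d\log(2\varepsilon)$. You have merely written out the substitution and algebra that the paper leaves implicit.
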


\begin{proof}
    Applying~\cref{lemma:upper_bound_DKL_uniform} alongside~\cref{lemma:DKL_embedding_uniform} is enough.
\end{proof}

\section{Details of implementation}
\label{appendix:implementation}

%\subsection{Architecture of the encoder}

For experiments on MNIST dataset, we use a simple ConvNet with three convolutional and two fully connected layers.
A three-layer fully-connected perceptron serves as a critic network for the InfoNCE loss.
We provide the details in~\cref{table:synthetic_architecture}.
We use additive Gaussian noise with $ \sigma = 0.6 $ as an input augmentation.
Training hyperparameters are as follows: batch size = 1024, 2000 epochs, Adam optimizer~\citep{kingma2017adam} with learning rate $ 10^{-3} $.

%\FloatBarrier

\begin{table}[h!]
    \center
    \caption{The NN architectures used to conduct the tests on MNIST images in Section~\ref{section:experiments}.}
    \label{table:synthetic_architecture}
    \begin{tabular}{cc}
    \toprule
    NN & Architecture \\
    \midrule
    \makecell{ConvNet,\\ $ 24 \times 24 $ \\ images} &
        \small
        \begin{tabular}{rl}
            $ \times 1 $: & Conv2d(1, 32, ks=3), MaxPool2d(2), BatchNorm2d, LeakyReLU(0.01) \\
            $ \times 1 $: & Conv2d(32, 64, ks=3), MaxPool2d(2), BatchNorm2d, LeakyReLU(0.01) \\
            $ \times 1 $: & Conv2d(64, 128, ks=3), MaxPool2d(2), BatchNorm2d, LeakyReLU(0.01) \\
            $ \times 1 $: & Dense(128, 128), LeakyReLU(0.01), Dense(128, dim) \\
        \end{tabular} \\
        \\
    \makecell{Critic NN, \\ pairs of vectors} &
        \small
        \begin{tabular}{rl}
            $ \times 1 $: & Dense(dim + dim, 256), LeakyReLU(0.01) \\
            $ \times 1 $: & Dense(256, 256), LeakyReLU(0.01), Dense(256, 1) %\\
            %$ \times 1 $: & Dense(256, 1)
        \end{tabular} \\
    \bottomrule
    \end{tabular}
\end{table}

%\FloatBarrier

The results on CIFAR datasets~\citep{krizhevsky2009CIFAR} in Table~\ref{table:cifar-probing} were obtained with the standard configuration of SSL methods.
Namely, we use ResNet-18~\citep{he2015ResNet} backbone.
Projection head for SimCLR consists of two linear layers, for VICReg~-- 3 layers.
Respective configurations are $ [2048, 256] $ and $ [2048, 2048, 2048] $, meaning embedding dimensions are 256 and 2048, respectively.
We apply a standard set of augmentations:

\begin{lstlisting}
PretrainTransform: Compose(
    RandomResizedCrop(
        size=(32, 32), 
        scale=(0.08, 1.0), 
        ratio=(0.75, 1.3333333333333333),
        interpolation=InterpolationMode.BICUBIC, 
        antialias=True
    )

    RandomApply(   
        ColorJitter(
            brightness=(0.6, 1.4), 
            contrast=(0.6, 1.4), 
            saturation=(0.8, 1.2), 
            hue=(-0.1, 0.1)
        )
    )

    RandomGrayscale(p=0.2)
    GaussianBlur(p=0.0)
    Solarization(p=0.0)
    RandomHorizontalFlip(p=0.5)
    ToTensor()
    Normalize(
    mean=[0.4914, 0.4822, 0.4465], 
    std=[0.247, 0.2435, 0.2616], 
    inplace=False)
)
TestTransform: Compose(
      ToTensor()
      Normalize(
        mean=[0.4914, 0.4822, 0.4465], 
        std=[0.247, 0.2435, 0.2616], 
        inplace=False
    )
)    
\end{lstlisting}

Training hyperparameters are as follows: batch size $ 256 $, $ 800 $ epochs, LARS optimizer~\citep{you2017LARS} with clipping, base learning rate $ 0.3 $,
momentum $ 0.9 $, trust coefficient $ 0.02 $, weight decay $ 10^{-4} $.
For SimCLR, we use temperature $ 0.2 $, for VICReg~-- standard hyperparameters $ (25,25,1) $.

We also provide the source code in the supplementary materials.

\section{Conditioning generative adversarial networks}
\label{appendix:cGAN}

In this section, we leverage our method to generate conditioning vectors for a conventional cGAN setup~\citep{mirza2014cGAN}.
We use the two-dimensional Gaussian embeddings of the MNIST dataset,
acquired from the noise level $ \sigma = 0.1 $.
For conditioned generation, we get embeddings from a batch of original images.
For unconditioned generation, embeddings are sampled from $ \normal(0,\identity) $.
The results are presented in~\cref{figure:cGAN_conditioned,figure:cGAN_unconditioned}.

\begin{figure}[p!]
    \centering
    \begin{subfigure}{0.2\textwidth}
        \centering
        \includegraphics[width=0.425\linewidth]{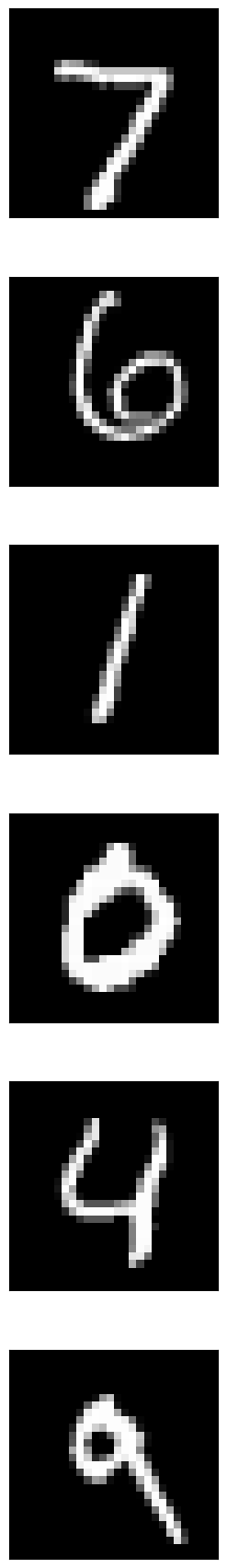}
        \caption{Encoded images}
    \end{subfigure}
    \begin{subfigure}{0.5\textwidth}
        \centering
        \includegraphics[width=1.0\linewidth]{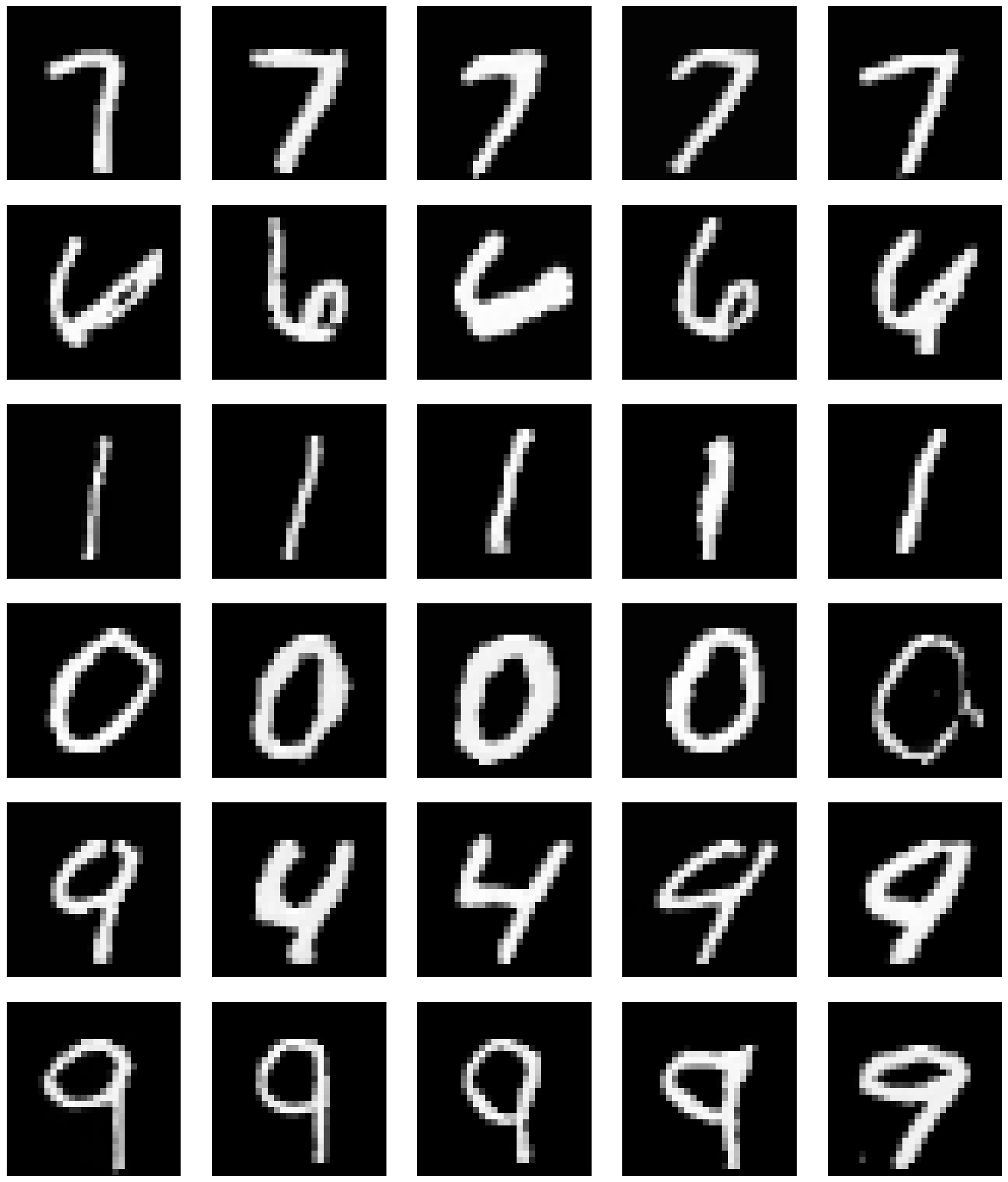}
        \caption{Generated images}
    \end{subfigure}
    \caption{Results of conditional generation.}
    \label{figure:cGAN_conditioned}
\end{figure}

\begin{figure}[p!]
    \centering
    \includegraphics[width=0.5\linewidth]{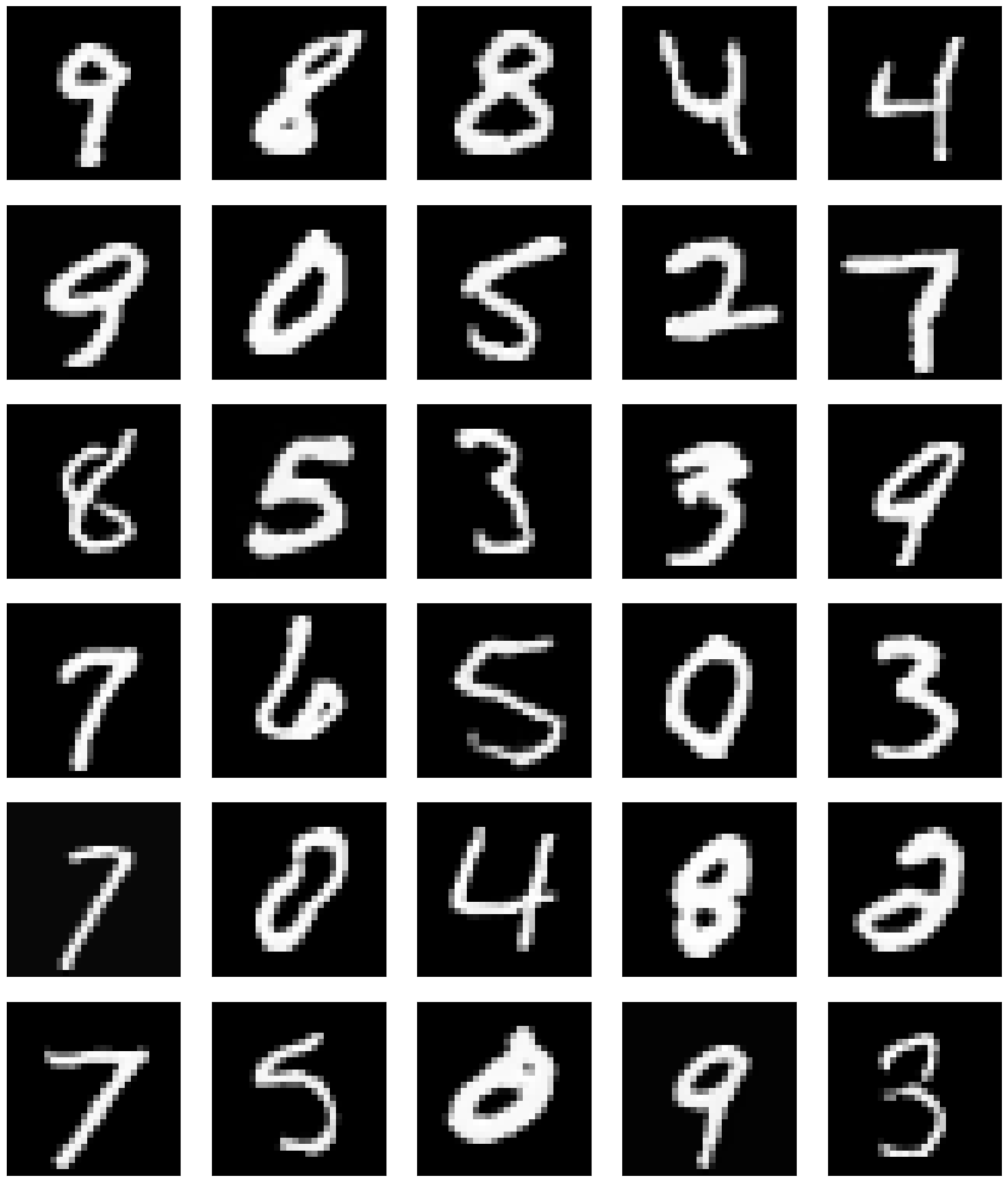}
    \caption{Results of unconditional generation.}
    \label{figure:cGAN_unconditioned}
\end{figure}

%\FloatBarrier

\section{Additional Experiments on CIFAR-10}
\label{appendix:CIFAR}

Similarly to embedding MNIST into two-dimensional space, we run experiments with CIFAR-10.
We use a ResNet-18 architecture augmented with an additional linear layer to map the 512-dimensional output to $\mathbb{R}^2$, along with a batch normalization layer without learnable parameters (i.e., `affine=False`).
We experimented with both contrastive (SimCLR) and non-contrastive (VICReg) objectives, observing similar results across both methods. The experimental pipeline closely follows the theoretical setup:
\begin{itemize}
    \item Unaugmented Input: The unaugmented image $X$ is fed to the encoder, yielding $f(X)$, followed by noise injection to obtain $f(X) + Z$.
    \item Augmented Input: The augmented image $X'$ is fed to the encoder to obtain $f(X')$, without noise injection.
\end{itemize}

Both outputs are then processed by the projection head and loss function (e.g., InfoNCE).
The noise $Z$ is sampled from a normal distribution with standard deviation $\sigma$.
We conducted pre-training across various noise magnitudes with $\sigma \in \{0.0, 0.01, 0.025, 0.05, 0.1, 0.2, 0.2658, 0.5\}$. For each $\sigma$, we ran training with 5 different random seeds with $\texttt{base\_lr}=0.1$ for 400 epochs (visualized in Figure~\ref{figure:CIFAR_embeddings_visualization_400}).
To speed up convergence we additionally trained 3 models ($\sigma \in \{0, 0.05, 0.1\}$) with $\texttt{base\_lr}=0.5$ for 800 epochs (visualized in Figure~\ref{figure:CIFAR_embeddings_visualization_800}).

\begin{figure}[p!]
    \centering
    \begin{subfigure}{0.32\textwidth}
        \centering
        \includegraphics[width=\linewidth]{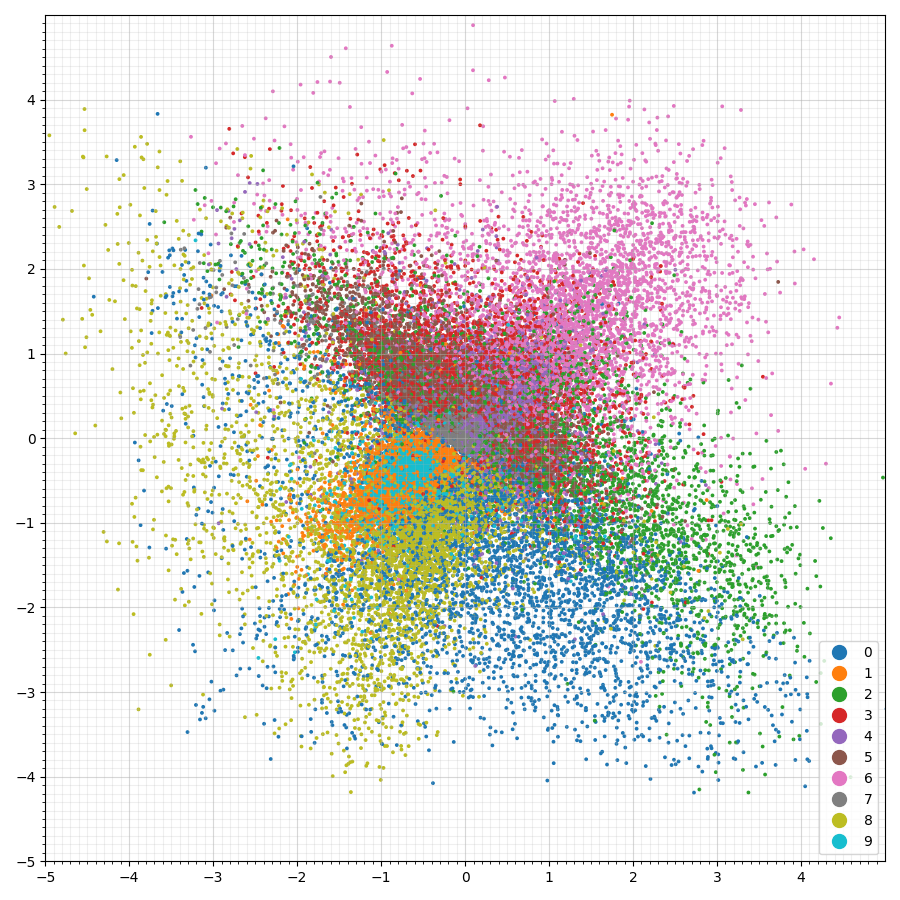}
        \caption{No noise injection}
        \label{subfigure:CIFAR_embeddings_visualization_nonoise_400}
    \end{subfigure}
    \begin{subfigure}{0.32\textwidth}
        \centering
        \includegraphics[width=\linewidth]{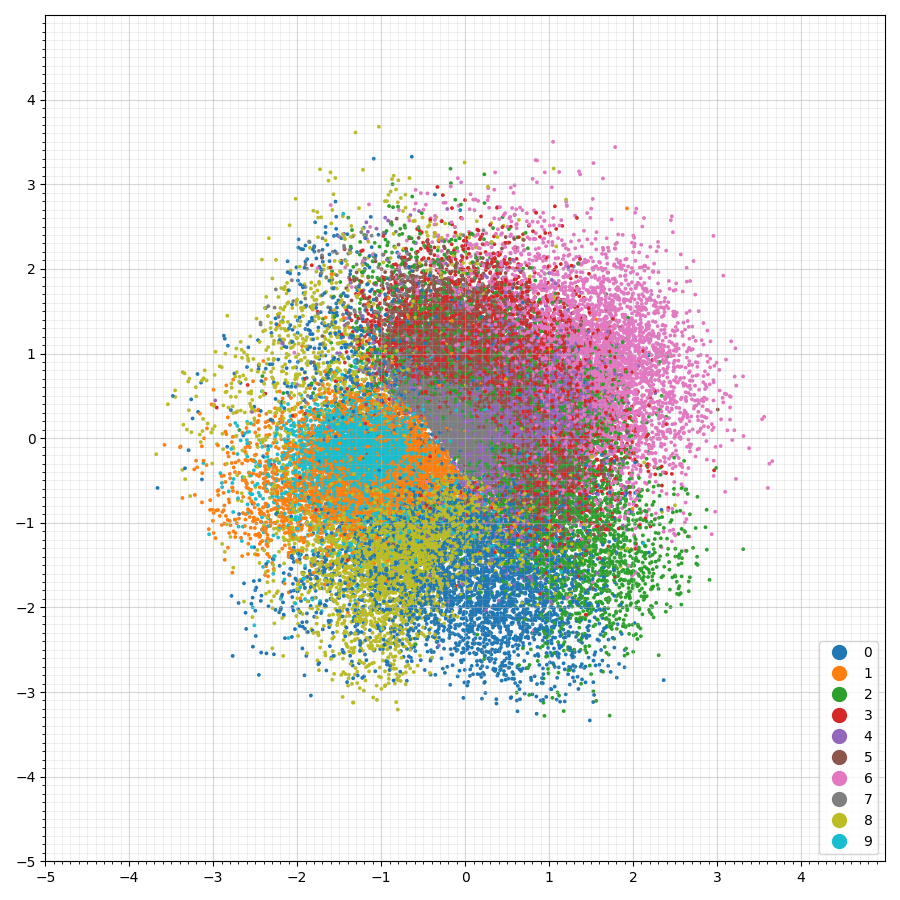}
        \caption{Normal distribution, ${\sigma = 0.05 }$}
        \label{subfigure:CIFAR_embeddings_visualization_sigma1_400}
    \end{subfigure}
    \begin{subfigure}{0.32\textwidth}
        \centering
        \includegraphics[width=\linewidth]{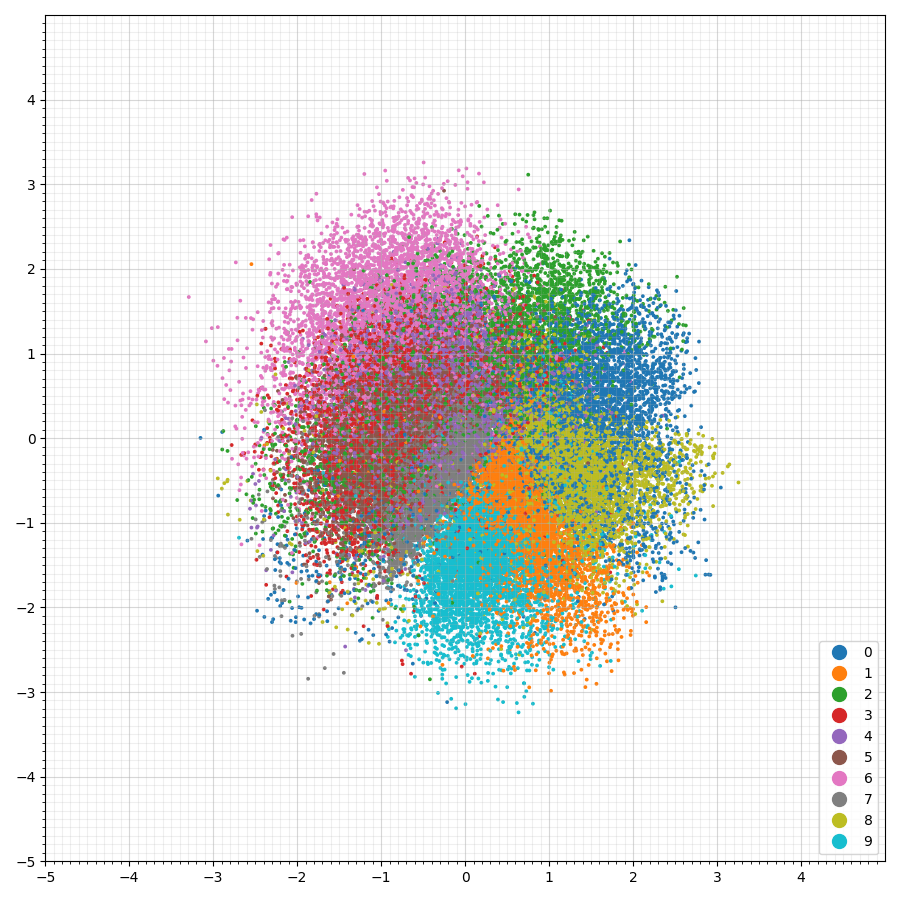}
        \caption{Normal distribution, ${\sigma = 0.1 }$}
        \label{subfigure:CIFAR_embeddings_visualization_sigma2_400}
    \end{subfigure}
    \caption{Visualization of 2D representations of the CIFAR-10 dataset, lr=0.1, 400 epochs.}
    \label{figure:CIFAR_embeddings_visualization_400}
\end{figure}

\begin{figure}[p!]
    \centering
    \begin{subfigure}{0.32\textwidth}
        \centering
        \includegraphics[width=\linewidth]{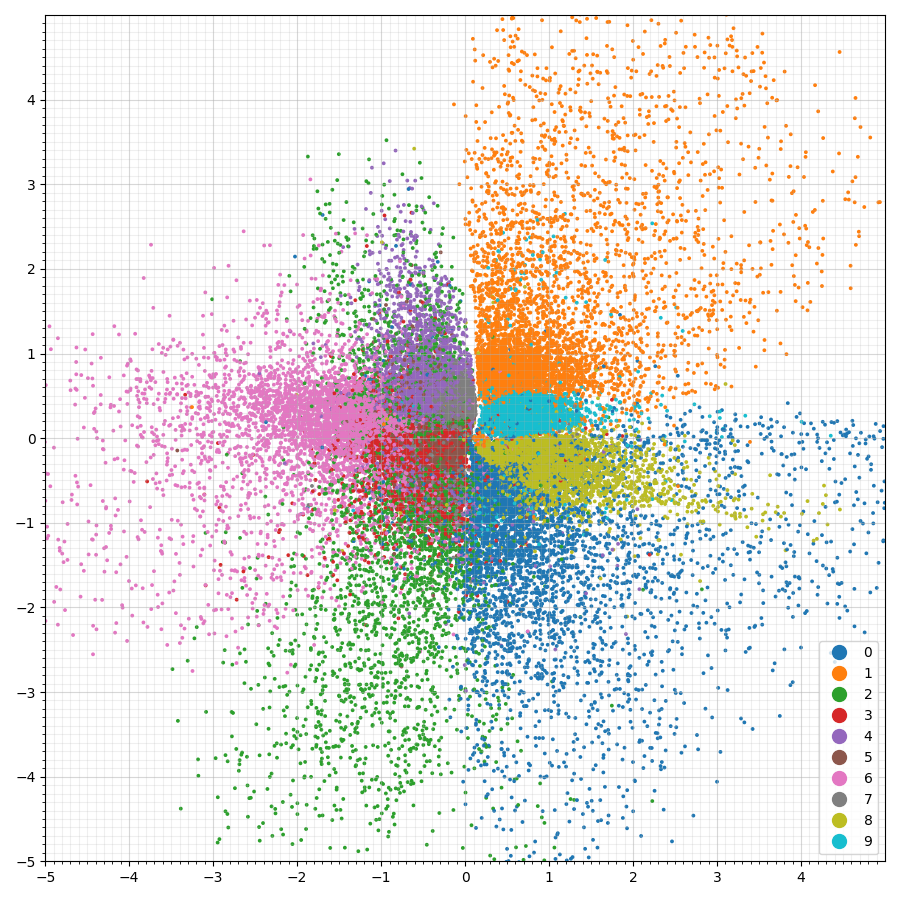}
        \caption{No noise injection}
        \label{subfigure:CIFAR_embeddings_visualization_nonoise_800}
    \end{subfigure}
    \begin{subfigure}{0.32\textwidth}
        \centering
        \includegraphics[width=\linewidth]{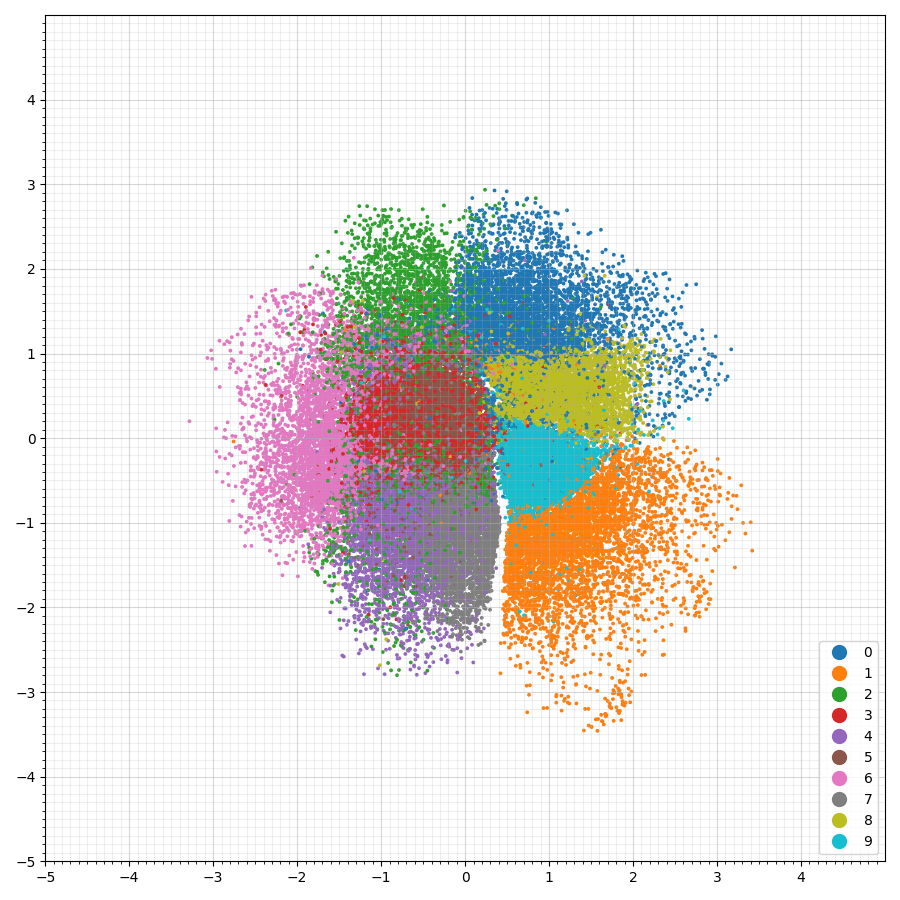}
        \caption{Normal distribution, $ \sigma = 0.05 $}
        \label{subfigure:CIFAR_embeddings_visualization_sigma1_800}
    \end{subfigure}
    \begin{subfigure}{0.32\textwidth}
        \centering
        \includegraphics[width=\linewidth]{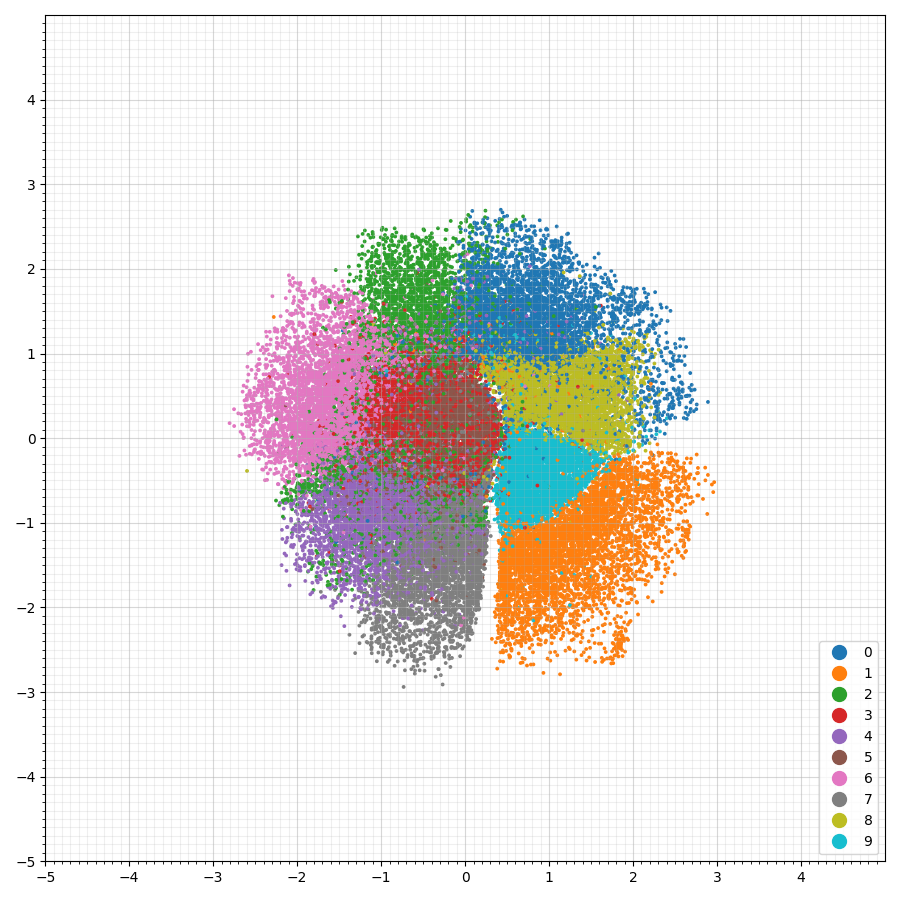}
        \caption{Normal distribution, $ \sigma = 0.1 $ 
        % (400 epochs - training in progress
        }
        \label{subfigure:CIFAR_embeddings_visualization_sigma2_800}
    \end{subfigure}
    \caption{Visualization of 2D representations of the CIFAR-10 dataset, lr=0.5, 800 epochs.}
    \label{figure:CIFAR_embeddings_visualization_800}
\end{figure}

\begin{figure*}[p!]
    \centering
    \small
    \input{./gnuplottex/output-gnuplottex-fig2.tex}
    
    \caption{Results for CIFAR10 dataset in the Gaussian DM setup for $ d = 2 $ with varying capacity $ C = \frac{d}{2} \log \left( 1 + 1/\sigma^2 \right) $,
    measured in nats (units of information based on natural logarithms).
    The dotted line denotes the minimal capacity required to preserve the information about the class labels in $ f(X) + Z $.
    %The dashed line represents the upper bound on the mutual information~\eqref{eq:Gaussian_embeddings_MI_upper_bound}.
    We run $ 5 $ experiments for each point and report mean values and $ 99\% $ asymptotic confidence intervals.
    InfoNCE loss is used to approximate~\eqref{eq:Donsker_Varadhan_MI}.}
    \label{figure:CIFAR_capacity_plot}
    %\vskip 0.2in
\end{figure*}

After pre-training, we evaluated downstream performance using ROC AUC and performed normality tests on the learned 2D representations from a total of 40 trained models (that were trained for 400 epochs) (see Figure~\ref{figure:CIFAR_capacity_plot}), depicting similar results to our previous MNIST experiments (Figure~\ref{figure:MNIST_capacity_plot}).

%\FloatBarrier

\section{Additional Experiments on ImageNet}
\label{appendix:ImageNet}

To better assess the accuracy-DM trade-off in complex setups, additional experiments with the ImageNet datasets are conducted with VICReg loss.

For ImageNet-100, as backbone we use ResNet-18 followed by batch normalization layer without learnable parameters, i.e. \texttt{BatchNorm1d(512, affine=False)}.
Projection head consists of standard MLP sequence: \texttt{nn.Linear(512, 2048), nn.BatchNorm1d(2048), nn.ReLU(), nn.Linear(2048, 2048), nn.BatchNorm1d(2048), nn.ReLU(), nn.Linear(2048, 2048)}.

For ImageNet-1k, we use ResNet-50 as backbone architecture, followed by batch normalization layer without learnable parameters (\texttt{dim=2048}). The projection head is comprised similar to the above setup of 3-layer MLP with hidden dimension = 8192.

In both setups, we run pre-training for 100 epochs and conduct linear probing afterwards closely following standardized protocol for augmentation and training.  for ImageNet-1k, batch size and other (e.g. optimizer, data augmentation) hyperparameters set to specified in the original paper \cite{bardes2021vicreg}, however, evaluation batch size is set to 2048 for faster results.
For ImageNet-100, batch size = 256, optimizer's trust coefficient = 0.02, base learning rate = 0.5, weight decay = \texttt{1e-4}, inspired by \cite{JMLR:v23:21-1155}. The resulting accuracies are reported in Table~\ref{table:imagenet-probing} across 5 random seeds.

% \begin{table}[h]
%     \centering
%     \begin{tabular}{lcccc}
%         \toprule
%         & \multicolumn{2}{c}{ImageNet-100} & \multicolumn{2}{c}{ImageNet} \\
%         $\sigma$ & top-1 & top-5  & top-1 & top-5 \\
%         \midrule
%          0 & 72.18 $\pm$ 0.40 &  92.02 $\pm$ 0.12 & 67.41 $\pm$ 0.17 & 87.43 $\pm$ 0.08 \\
%          0.05 & 72.27 $\pm$ 0.38 & 91.99 $\pm$ 0.18 & 67.29 $\pm$ 0.20 & 87.47 $\pm$ 0.06 \\
%          0.1 & 72.07 $\pm$ 0.27 & 91.65 $\pm$ 0.13  & 67.30 $\pm$ 0.13 & 87.43 $\pm$ 0.02 \\
%          0.2 & 71.68 $\pm$ 0.50 & 91.61 $\pm$ 0.24 &  67.19 $\pm$ 0.12 & 87.32 $\pm$ 0.09 \\
%         \bottomrule
%     \end{tabular}
%     \caption{Noise magnitude influence on linear probing on ImageNet-100.
%     Top-1 and top-5 accuracy (in \%) is reported.}
%     \label{table:imagenet100-probing}
% \end{table}

\end{document}